\documentclass{article}

\usepackage{arxiv}

\usepackage[utf8]{inputenc} 
\usepackage[T1]{fontenc}    
\usepackage{hyperref}       
\usepackage{url}            
\usepackage{booktabs}       
\usepackage{amsfonts}       
\usepackage{nicefrac}       
\usepackage{microtype}      
\usepackage{lipsum}
\usepackage{graphicx}
\usepackage{amsfonts}
\usepackage{amsmath}
\usepackage{amsthm}
\usepackage{algorithm}
\usepackage{algpseudocode}
\usepackage{hyphenat}
\usepackage{comment}
\usepackage{subcaption}

\usepackage{tcolorbox}
\usepackage{pgfplots}
\pgfplotsset{compat=1.18}
\usepackage{siunitx}
\usepackage{booktabs}
\usepackage{float}
\usepackage[authoryear]{natbib}
\usepackage{amsthm}

\theoremstyle{plain}
\newtheorem{theorem}{Theorem}
\newtheorem{lemma}{Lemma}
\newtheorem*{lemma*}{Lemma}

\newtheorem{corollary}{Corollary}
\newtheorem*{corollary*}{Corollary}

\theoremstyle{definition}

\theoremstyle{remark}

\graphicspath{ {./images/} }

\title{K-Means as a Radial Basis Function Network: A Variational and Gradient-Based Equivalence}

\author{
 Felipe de Jesús Félix Arredondo \\
  School of Engineering and Sciences\\
  Tecnológico de Monterrey\\
  Monterrey, Mexico\\
  \texttt{ffelix3ro@gmail.com} \\
 \And
 Manuel Alejandro Ucan Puc \\
  School of Engineering and Sciences\\
  Tecnológico de Monterrey\\
  Monterrey, Mexico\\
  \texttt{Alejandro.ucan-puc@tec.mx} \\
 \And
 Carlos Astengo Noguez \\
  School of Engineering and Sciences\\
  Tecnológico de Monterrey\\
  Guadalajara, Mexico\\
  \texttt{castengo@tec.mx} \\
}

\begin{document}
\maketitle
\begin{abstract}
This work establishes a rigorous variational and gradient-based equivalence between the classical K-Means algorithm and differentiable Radial Basis Function (RBF) neural networks with smooth responsibilities. By reparameterizing the K-Means objective and embedding its distortion functional into a smooth weighted loss, we prove that the RBF objective $\Gamma$-converges to the K-Means solution as the temperature parameter $\sigma$ vanishes. We further demonstrate that the gradient-based updates of the RBF centers recover the exact K-Means centroid update rule and induce identical training trajectories in the limit. To address the numerical instability of the Softmax transformation in the low-temperature regime, we propose the integration of Entmax-1.5, which ensures stable polynomial convergence while preserving the underlying Voronoi partition structure. These results bridge the conceptual gap between discrete partitioning and continuous optimization, enabling K-Means to be embedded directly into deep learning architectures for the joint optimization of representations and clusters. Empirical validation across diverse synthetic geometries confirms a monotone collapse of soft RBF centroids toward K-Means fixed points, providing a unified framework for end-to-end differentiable clustering.
\end{abstract}

\keywords{
K-Means,
Radial Basis Function (RBF) Networks,
$\Gamma$-convergence,
End-to-end differentiable clustering,
Entmax-1.5,
Variational equivalence,
Joint optimization,
Gradient dynamics
}

\section{Introduction}



K-Means remains one of the most widely used clustering algorithms due to its simplicity, closed-form centroid updates, and near-linear computational cost \citep{arthur2006slow,kanungo2002efficient}. Its limitation is not algorithmic but structural: hard assignments induce non-differentiable Voronoi partitions, which prevents K-Means from being directly embedded into end-to-end gradient-based optimization pipelines. As a result, K-Means is usually treated as an external discrete procedure rather than as part of a continuous learning system.

These structural limitations often push K-Means toward poor local minima in heterogeneous datasets. While alternative methods, such as density-based or spectral clustering, mitigate these failures by relaxing geometric assumptions \citep{gines2025improving, ahmed2020k, schubert2023stop}, our work focuses on reconciling the computational gap between K-Means and differentiable RBF networks.

Radial Basis Function (RBF) networks, on the other hand, optimize their centers through smooth, distance-based activations and are fully compatible with gradient descent \citep{moody1989fast,park1991universal}. Despite this, their relationship with K-Means is often described as an approximation or a heuristic relaxation \citep{xu1996convergence}, but heuristics lack guarantees of convergence. Previous work has explored related connections from a probabilistic perspective, showing that K-Means can be interpreted as a variational EM approximation to Gaussian mixture models with isotropic covariances \citep{lucke2019k}. These approaches establish a link between hard clustering and soft probabilistic assignments within the EM framework.

This dichotomy raises a fundamental question: can K-Means itself be characterized as a differentiable model rather than as an external discrete procedure with the RBFs? 

We answer this affirmatively from a variational standpoint. Under mild and explicit conditions (formalized in Sections~\ref{subse:rbfequiv} and \ref{subse:updateequiv}), the K-Means distortion can be rewritten in terms of responsibility variables and identified with a smooth weighted RBF loss. In this formulation, the RBF functional admits the classical K-Means objective as its $\Gamma$-limit as $\sigma \to 0$, and the corresponding gradient-flow dynamics collapse to the centroid update rule, yielding identical centroid configurations and cluster assignments in the singular limit. K-Means therefore emerges as the zero-temperature regime of a differentiable RBF network, providing a unified optimization framework in which clustering and representation learning are jointly optimized. This perspective is particularly relevant given the known structural limitations of K-Means under heterogeneous data distributions: its reliance on isotropic, Euclidean geometry \citep{Celebi_2013, ushenko2025agile} restricts its ability to represent elongated, anisotropic, or manifold-supported structures, often forcing complex regions to be approximated by multiple spherical components and degrading geometric interpretability \citep{raykov2016k}.


This structural separation reveals a fundamental methodological gap between discrete partitioning and continuous optimization. K-Means induces sharp Voronoi regions through hard assignments and closed-form centroid updates \citep{telgarsky2010hartigan}, producing piecewise-constant decision boundaries, whereas RBF networks define smoothly decaying, overlapping receptive fields optimized via gradient-based dynamics. Although both share a common geometric intuition, their optimization mechanisms are incompatible, which in practice enforces a two-stage pipeline where K-Means initializes centers that are subsequently embedded into a continuous model \citep{caron2018deep}. Interpreting K-Means as the zero-temperature limit of a differentiable RBF formulation resolves this tension within a single variational framework: the clustering objective becomes embedded in a smooth loss whose stationary conditions recover the classical centroid updates, thereby unifying combinatorial partitioning and gradient-based representation learning and enabling coherent joint optimization \citep{que2016back, faraoun2006neural, zhu2011research}.

Our main theoretical results are summarized as follows:

\paragraph{Main Contribution.}
The main theoretical result establishes a variational equivalence between the soft RBF objective and the classical K-Means distortion functional. Specifically, we prove that:

(i) the RBF loss $\Gamma$-converges to the K-Means objective as $\sigma \to 0$;

(ii) the gradient dynamics of the RBF centers recover the exact K-Means centroid update; and

(iii) both formulations generate identical centroid configurations and partitions in the limit.

Consequently, K-Means is characterized as the zero-temperature limit of a differentiable RBF network within a unified optimization framework.

\paragraph{Organization of the Paper.}
The remainder of the paper is organized as follows. 
Section~\ref{subse:rbfequiv} formalizes the variational equivalence between the RBF loss and the K-Means distortion functional. 
Section~\ref{subse:updateequiv} analyzes the gradient-flow dynamics and proves recovery of the centroid update rule. 
Section~\ref{sec:experimental} provides empirical validation across synthetic geometries. 
Finally, Section~\ref{sec:extension} discusses implications for end-to-end clustering and hybrid neural architectures.

\section{Preliminaries and Theoretical Background}

Before establishing the equivalence between K-Means and Radial Basis Function (RBF) models, it is necessary to introduce the mathematical objects that underlie both formulations. Although K-Means is traditionally presented as a discrete clustering algorithm and RBF networks as continuous, differentiable architectures, both rely fundamentally on distance-based representations in Euclidean spaces.

\subsection{K-Means}

First, we formalize and simplify in order to obtain a representation compatible with later comparisons to RBF-based neural models. The notation adopted follows the generalized formulation introduced in \citep{arthur2006k}, and the algorithmic structure remains consistent with the theoretical treatment in \citep{blomer2016theoretical}. The goal of this section is not to introduce new variants of K-Means, but to express its standard formulation in a way that exposes its underlying optimization structure.

Let $D=\{x_1,\dots,x_n\}\subset\mathbb{R}^d$ be a dataset of $n$ points. The K-Means algorithm seeks a partition of the dataset into $k$ disjoint subsets $S=\{S_1,\dots,S_k\}$ together with a collection of representative points (centroids) $M=\{\mu_1,\dots,\mu_k\}\subset\mathbb{R}^d$.

For each subset $S_i$, define its empirical variance as
\begin{equation}
\mathrm{Var}(S_i) = \frac{1}{|S_i|} \sum_{x \in S_i} \|x - \mu_i\|^2,
\end{equation}
where $\mu_i = \frac{1}{|S_i|}\sum_{x\in S_i} x$ denotes the empirical mean of $S_i$.

The classical K-Means objective consists of minimizing the total within-cluster squared distortion
\begin{equation}
    \min_{S_1,\dots,S_k} \sum_{i=1}^k \sum_{x\in S_i} \|x-\mu_i\|^2
    = \min_{S_1,\dots,S_k} \sum_{i=1}^k |S_i|\,\mathrm{Var}(S_i),
    \label{eqn:costkmean}
\end{equation}
which penalizes intra-cluster dispersion. The equivalence between both expressions follows directly from the definition of empirical variance.

From this objective, the assignment rule naturally follows:

\begin{equation}
S_i = \{\, x_j \in D : \|x_j-\mu_i\|^2 \le \|x_j-\mu_l\|^2,\ \forall l\,\},
\end{equation}

where $i$ is the cluster and $j$ is the index of the data, and this simply assigns each point to its nearest centroid based on the Euclidean distance between them, which generates the assigned clusters of the data and makes groups determined by the centers. Once the partition is fixed, minimizing \eqref{eqn:costkmean} with respect to $\mu_i$ yields the closed-form update
\begin{equation}
\mu_i = \frac{1}{|S_i|}\sum_{x\in S_i} x,
\end{equation}

showing that each centroid is the arithmetic mean of the points assigned to it. The algorithm iterates these two steps until convergence, typically defined by a tolerance condition
\begin{equation}
\max_i \|\mu_i^{(t)} - \mu_i^{(t-1)}\| < \tau.
\end{equation}

or stop after $T$ steps chosen by hand.

\subsection{Radial Basis Function Networks and Their Responsibilities} 

Radial Basis Function (RBF) networks are among the earliest neural architectures grounded in interpolation theory and distance-based modeling \citep{lowe1988multivariable,moody1989fast,park1991universal} used to measure distances between centroids extracted from a certain data under a Gaussian perspective. Let $D\subset\mathbb{R}^d$ denote the dataset or a processed latent space, and $x\in D$ an input vector. To ensure full compatibility with the notation used for K-Means, an RBF layer is defined using the same number of centers $k$, with parameters
\[
    \Theta=\{\mu_1,\dots,\mu_k\}, \qquad \mu_j\in\mathbb{R}^d.
\]
The response of the $j$-th unit is the scalar function
\begin{equation}
    \mathrm{RBF}_{\mu_j}(x) = \phi\!\left(\|x-\mu_j\|_2\right),
\end{equation}
with Gaussian activation function that measures a normalized distance under some $\sigma$ parameter 
\begin{equation}
    \phi(r) = \exp\!\left(-\frac{r^{2}}{2\sigma^{2}}\right).
\end{equation}

To keep the original scalar notation $\mathrm{RBF}_{\mu}(x)$ intact while making the $k$ outputs explicit, the full RBF layer is expressed as the vector-valued map
\begin{equation}
    \mathrm{RBF}_{\Theta}(x)
    =
    \begin{bmatrix}
        \mathrm{RBF}_{\mu_1}(x)\\[2pt]
        \vdots\\[2pt]
        \mathrm{RBF}_{\mu_k}(x)
    \end{bmatrix}
    =
    \begin{bmatrix}
        \phi(\|x-\mu_1\|_2)\\[2pt]
        \vdots\\[2pt]
        \phi(\|x-\mu_k\|_2)
    \end{bmatrix}
    \in\mathbb{R}^k,
\end{equation}
so each component corresponds exactly to the classical single-unit definition. The parameters $\mu_j$ are optimized via gradient-based learning like many NNs \citep{goodfellow2016deep}. 

In contrast with the closed-form centroid update in K-Means, the stochastic gradient descent yields the generic update rule
\begin{equation}
    \label{eqn:gradStep}
    \mu_j' = \mu_j - \eta\, \nabla_{\mu_j}\mathcal{L},
\end{equation}
where $\mathcal{L}$ denotes a differentiable objective functional whose explicit structure will be introduced in subsequent sections. At this stage, no particular form of $\mathcal{L}$ is imposed; the analysis proceeds at the level of a general smooth loss in order to preserve compatibility with the variational constructions developed later.

Thus, in an RBF network the centers evolve continuously and may interact with deeper neural components under arbitrarily small gradient steps.

A key property of the Gaussian basis is the strict monotone correspondence between Euclidean distance and activation:
\[
    \|x_1 - \mu\| < \|x_2 - \mu\|
    \quad \Longleftrightarrow \quad
    \exp(-\|x_1 - \mu\|^2) > \exp(-\|x_2 - \mu\|^2).
\]
Therefore, the maximum-activation decision rule is equivalent to nearest-centroid assignment:
\[
    \arg\max_j \mathrm{RBF}_{\mu_j}(x)
    =
    \arg\min_j \|x - \mu_j\|^2.
\]
This induces the same Voronoi partition as K-Means,
\[
    S_i = \{\, x \;:\; i = \arg\min_j \|x - \mu_j\|^2 \,\}.
\]


\section{Formal Comparison Between the \texorpdfstring{$K$}{k}-Means Update and the RBF Update}

In this section, we show the ways in which both models are the same under certain circumstances in the type of decision process under $\Gamma$-convergence and how the learning rate can be adapted to force a hard update in the process instead of gradient steps, serving as the backbone of this paper.

To analyze the limiting behavior of $\mathcal{L}_{\sigma}$ as $\sigma \rightarrow 0$, we first approximates the K-Means distortion and converges to it in the sense of $\Gamma$-convergence, we analyze the limiting behavior of $\mathcal L_\sigma$ as $\sigma \to 0$. and prove that each gradient step $\mu' \leftarrow \mu -\eta\nabla\mathcal{L}(\mu)$ is equivalent to $\mu_i=(1/|S_i|)\sum_{j \in S_i} x_j$ under certain circumstances. 

\subsection{Variational Reparametrization via Responsibilities}

The classical $K$-Means distortion can be written as a joint optimization
problem over centroids and assignment variables. Introducing binary indicators
\[
r_{ij} \in \{0,1\},
\qquad
\sum_{j=1}^k r_{ij}=1 \quad \text{for all } i,
\]
the distortion functional becomes
\begin{equation}
J(\mu)
=
\min_{r \in \mathcal R}
\sum_{i=1}^n \sum_{j=1}^k
r_{ij}\,\|x_i-\mu_j\|^2,
\label{eq:variational-kmeans}
\end{equation}
where
\[
\mathcal R
=
\Big\{
r \in \{0,1\}^{n\times k}
:
\sum_{j=1}^k r_{ij}=1
\Big\}.
\]

This formulation makes explicit that $K$-Means is an optimization problem over
the product space
\[
\mathcal R \times \mathbb R^{k\times d},
\]
combining discrete assignments with continuous centroids.

The hard assignment rule
\[
r_{ij} = \mathbf 1_{\{j=\arg\min_\ell \|x_i-\mu_\ell\|^2\}}
\]
is therefore the minimizer of \eqref{eq:variational-kmeans} with respect to
$r$ for fixed $\mu$. This variational formulation is the natural starting
point for constructing continuous relaxations of the model.

\subsection{Entropic Relaxation and Zero-Temperature Limit}

A continuous relaxation of \eqref{eq:variational-kmeans} is obtained by
replacing the discrete set $\mathcal R$ with the probability simplex
\[
\Delta^{k-1}
=
\Big\{
r_i \in \mathbb R^k :
r_{ij}\ge 0,
\ \sum_{j=1}^k r_{ij}=1
\Big\}.
\]

For $\sigma>0$, consider the entropically regularized functional
\begin{equation}
\mathcal J_\sigma(\mu,r)
=
\sum_{i=1}^n \sum_{j=1}^k
r_{ij}\,\|x_i-\mu_j\|^2
+
2\sigma^2
\sum_{i=1}^n \sum_{j=1}^k
r_{ij}\log r_{ij}.
\label{eq:entropic-functional}
\end{equation}

For fixed $\mu$, the minimizer of $\mathcal J_\sigma$ with respect to $r$
under the simplex constraint is given by
\begin{equation}
r_{ij}(\sigma;\mu)
=
\frac{
\exp\!\left(-\|x_i-\mu_j\|^2/(2\sigma^2)\right)
}{
\sum_{\ell=1}^k
\exp\!\left(-\|x_i-\mu_\ell\|^2/(2\sigma^2)\right)
}.
\label{eq:softmax-variational}
\end{equation}

Thus, the Softmax responsibilities arise as the exact minimizer of the
entropically regularized distortion. Eliminating $r$ yields the reduced
functional
\[
\mathcal L_\sigma(\mu)
=
\sum_{i=1}^n
\sum_{j=1}^k
r_{ij}(\sigma;\mu)\,
\|x_i-\mu_j\|^2,
\]
which coincides with the RBF-based objective previously introduced.

The parameter $\sigma$ plays the role of temperature. As
$\sigma\to 0$, the entropy term in
\eqref{eq:entropic-functional} vanishes and the minimizer
$r_{ij}(\sigma;\mu)$ concentrates on the set
\[
A_i(\mu)
=
\arg\min_{1\le j\le k}
\|x_i-\mu_j\|^2.
\]

Consequently,
\[
r_{ij}(\sigma;\mu)
\longrightarrow
\mathbf 1_{\{j\in A_i(\mu)\}},
\]
and the entropically regularized model converges to the classical
hard-assignment $K$-Means formulation.

Therefore, $K$-Means can be interpreted as the zero-temperature limit of an
entropically relaxed clustering functional, and the RBF model arises as the
continuous interior of this variational family.

\subsection{Equivalence Between RBF Optimization and \texorpdfstring{$k$}{K}-Means}
\label{subse:rbfequiv}

To establish the equivalence between the optimal solutions of K-Means and those obtained by training an RBF network with an appropriate loss, we introduce the following construction. Let $\{x_i\}_{i=1}^n \subset \mathbb{R}^d$ be a dataset and let $\{\mu_j\}_{j=1}^k$ denote the trainable RBF centers. For $\sigma>0$, define the soft assignment matrix by
\[
r_{ij}(\sigma;\mu)
    = \frac{\exp\!\left(-\|x_i-\mu_j\|^2/(2\sigma^2)\right)}%
           {\sum_{\ell=1}^k \exp\!\left(-\|x_i-\mu_\ell\|^2/(2\sigma^2)\right)}.
\]

The RBF-network objective is then defined as the weighted distortion
\begin{equation}
\mathcal{L}_\sigma(\mu)
    = \sum_{i=1}^n \sum_{j=1}^k r_{ij}(\sigma;\mu)\,\|x_i-\mu_j\|^2.
\label{eq:RBF-loss}
\end{equation}

This formulation mirrors the classical K-Means distortion, but with continuous responsibilities instead of hard assignments. The following result does not assert equivalence of the underlying models; rather, it establishes variational equivalence in the sense of $\Gamma$-convergence: as $\sigma \to 0$, the RBF objective $\mathcal{L}_\sigma$ converges to the K-Means distortion $J$, and consequently their minimizers coincide in the limit.

\begin{theorem}[$\Gamma$-limit and convergence of minimizers]
\label{thm:gamma-refined}
Let $X=\{x_i\}_{i=1}^n\subset\mathbb R^d$ be fixed and let 
$\mathcal K\subset\mathbb R^{k\times d}$ be compact. 
For $\sigma>0$ define the soft clustering functional
\[
\mathcal L_\sigma(\mu)=\sum_{i=1}^n\sum_{j=1}^k 
r_{ij}(\sigma;\mu)\|x_i-\mu_j\|^2,
\]
and the hard clustering functional
\[
J(\mu)=\sum_{i=1}^n\min_{1\le j\le k}\|x_i-\mu_j\|^2.
\]

Then $\mathcal L_\sigma \xrightarrow{\Gamma} J$ on $\mathcal K$ as 
$\sigma\to0$. In particular, if $\sigma_m\to0$ and 
$\mu^*(\sigma_m)\in\arg\min_{\mu\in\mathcal K}\mathcal L_{\sigma_m}(\mu)$, 
every accumulation point $\bar\mu$ of 
$\{\mu^*(\sigma_m)\}_m$ is a minimizer of $J$ on $\mathcal K$.
\end{theorem}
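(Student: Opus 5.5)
The plan is to prove something stronger than $\Gamma$-convergence: uniform convergence $\mathcal L_\sigma\to J$ on $\mathcal K$. Both $\mathcal L_\sigma$ and $J$ are continuous on the compact set $\mathcal K$. A uniformly convergent sequence of continuous functionals $\Gamma$-converges to its limit. Both the liminf inequality and the recovery sequence then follow at once: given $\mu_\sigma\to\mu$, write
\[
|\mathcal L_\sigma(\mu_\sigma)-J(\mu)|\le \sup_{\mathcal K}|\mathcal L_\sigma-J|+|J(\mu_\sigma)-J(\mu)|,
\]
and for the recovery sequence take the constant sequence $\mu_\sigma\equiv\mu$.

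The key estimate is pointwise in $i$. Fix $\mu$ and $i$, write $d_j=\|x_i-\mu_j\|^2$ and $d_{\min}=\min_j d_j$. Since $\sum_j r_{ij}=1$,
\[
0\le \sum_j r_{ij}d_j - d_{\min}=\sum_j r_{ij}(d_j-d_{\min}).
\]
For each $j$ set $t=d_j-d_{\min}\ge 0$. Keeping only the minimizing index in the denominator gives $r_{ij}\le e^{-t/(2\sigma^2)}$, so
\[
r_{ij}(d_j-d_{\min})\le t\,e^{-t/(2\sigma^2)}\le \sup_{t\ge0}t e^{-t/(2\sigma^2)}=\frac{2\sigma^2}{e}.
\]
Summing over $i$ and $j$ yields
\[
0\le\mathcal L_\sigma(\mu)-J(\mu)\le \frac{2nk\sigma^2}{e}
\]
for every $\mu\in\mathbb R^{k\times d}$. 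This bound is uniform in $\mu$, so compactness of $\mathcal K$ is not even needed for it. Ties in the argmin cause no difficulty, because tied indices have $t=0$ and contribute nothing. This step is the crux of the argument. What needs care is to avoid claiming pointwise convergence of $r_{ij}$ to an indicator, which fails at ties, and to work with the excess distortion instead. Continuity of $J$ holds because it is a finite sum of minima of continuous functions.

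Finally, the convergence of minimizers. Let $\mu^*_m=\mu^*(\sigma_m)$, and let $\bar\mu\in\mathcal K$ be an accumulation point along a subsequence; it lies in $\mathcal K$ because $\mathcal K$ is closed. For any $\nu\in\mathcal K$, the uniform bound together with minimality gives
\[
J(\mu^*_m)\le\mathcal L_{\sigma_m}(\mu^*_m)\le \mathcal L_{\sigma_m}(\nu)\le J(\nu)+\frac{2nk\sigma_m^2}{e}.
\]
Passing to the limit along the subsequence and using continuity of $J$ gives $J(\bar\mu)\le J(\nu)$, so $\bar\mu$ minimizes $J$ on $\mathcal K$. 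As a by-product, $\min_{\mathcal K}\mathcal L_{\sigma}\to\min_{\mathcal K}J$. This is the standard fundamental theorem of $\Gamma$-convergence, here obtained directly from the quantitative estimate. Existence of the minimizers $\mu^*(\sigma_m)$ is guaranteed by continuity of $\mathcal L_\sigma$ on the compact set $\mathcal K$.
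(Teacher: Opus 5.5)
Your proof is correct. It takes a more quantitative route than the paper's, although the $\liminf$ step is essentially the same.

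\textbf{The paper's route.} It follows the generic $\Gamma$-convergence template.
\begin{itemize}
\item The $\liminf$ inequality rests on the same observation you use: a convex combination of the $\|x_i-\mu_j\|^2$ dominates their minimum, plus continuity of $J$.
\item The $\limsup$ inequality uses the constant recovery sequence together with a qualitative fact: the Softmax weights concentrate on the argmin set $A_i(\mu)$, and the remaining weights decay exponentially.
\item The statement about minimizers is obtained by citing the fundamental theorem of $\Gamma$-convergence, with compactness of $\mathcal K$ supplying equi-coercivity.
\end{itemize}

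\textbf{Your route.} You prove the two-sided bound $0\le\mathcal L_\sigma-J\le 2nk\sigma^2/e$ uniformly on $\mathbb R^{k\times d}$. This gives both $\Gamma$-inequalities at once, and you replace the abstract theorem by a direct sandwich argument. What this buys:
\begin{itemize}
\item an explicit $O(\sigma^2)$ rate for the functionals and for the optimal values, $0\le\min_{\mathcal K}\mathcal L_\sigma-\min_{\mathcal K}J\le 2nk\sigma^2/e$;
\item a self-contained proof;
\item an estimate that does not use $\mathcal K$ at all, as you note.
\end{itemize}

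\textbf{On ties.} The paper's proof of this theorem is also sound there. It asserts concentration on the \emph{set} $A_i(\mu)$, on which all distances coincide. Only the paper's earlier informal claim $r_{ij}\to\mathbf 1_{\{j\in A_i(\mu)\}}$ fails at ties, which is exactly the pitfall you flag.

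\textbf{On generality.} The paper's template needs only pointwise concentration of the responsibilities. Your estimate is not tied to the explicit Softmax formula either. Comparing the Softmax minimizer of the entropic functional with a one-hot competitor gives $\mathcal L_\sigma-J\le 2n\sigma^2\log k$. The same comparison works for Tsallis-regularized maps such as Entmax-1.5, so your quantitative approach also carries over to the paper's later setting.
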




\begin{proof}
Fix a sequence $\sigma_m \to 0$. We prove that 
$\mathcal L_{\sigma_m} \xrightarrow{\Gamma} J$ on $\mathcal K$.

Let $\mu_m \to \mu$ in $\mathcal K$. 
For each $i$ and each $m$, the vector 
$r_{i\cdot}(\sigma_m;\mu_m)$ belongs to the probability simplex 
$\Delta^{k-1}$ and therefore defines a convex combination. Hence
\[
\sum_{j=1}^k r_{ij}(\sigma_m;\mu_m)\|x_i-\mu_{m,j}\|^2
\ge 
\min_{1\le j\le k}\|x_i-\mu_{m,j}\|^2.
\]
Taking $\liminf_{m\to\infty}$ and using the continuity of 
$\mu\mapsto\min_{j}\|x_i-\mu_j\|^2$ (as a finite minimum of continuous functions), we obtain
\[
\liminf_{m\to\infty}
\sum_{j=1}^k r_{ij}(\sigma_m;\mu_m)\|x_i-\mu_{m,j}\|^2
\ge
\min_{1\le j\le k}\|x_i-\mu_j\|^2.
\]
Summing over $i=1,\dots,n$ yields
\[
\liminf_{m\to\infty}\mathcal L_{\sigma_m}(\mu_m)
\ge J(\mu),
\]
which establishes the $\liminf$ inequality.

To prove the $\limsup$ inequality, fix $\mu\in\mathcal K$ and consider the constant sequence $\tilde\mu_m \equiv \mu$. 
For each $i$, as $\sigma_m\to0$, the Softmax responsibilities concentrate on the set
\[
A_i(\mu):=\arg\min_{1\le j\le k}\|x_i-\mu_j\|^2,
\]
while the weights assigned to its complement decay exponentially fast. 
Consequently,
\[
\sum_{j=1}^k r_{ij}(\sigma_m;\mu)\|x_i-\mu_j\|^2
\longrightarrow
\min_{1\le j\le k}\|x_i-\mu_j\|^2.
\]
Since the number of data points is finite, we may pass the limit inside the finite sum and conclude that
\[
\lim_{m\to\infty}\mathcal L_{\sigma_m}(\tilde\mu_m)
=
J(\mu).
\]
This proves the $\limsup$ inequality and therefore 
$\mathcal L_{\sigma_m} \xrightarrow{\Gamma} J$ on $\mathcal K$.

Finally, since $\mathcal K$ is compact, every sequence of minimizers 
$\mu^*(\sigma_m)$ admits accumulation points. 
Compactness also guarantees equi-coercivity. 
By the fundamental theorem of $\Gamma$-convergence 
(see \citep{dal2012introduction}, \emph{An Introduction to $\Gamma$-Convergence}), 
every accumulation point of minimizers of $\mathcal L_{\sigma_m}$ 
is a minimizer of $J$ on $\mathcal K$.
\end{proof}


The previous result shows that, in the vanishing-temperature regime, the optimal RBF centers converge to the optimal K-Means centroids. This convergence is not specific to the Softmax-based responsibilities in Eq.~\ref{eq:softmax-variational}, but extends to alternative assignment mechanisms, including entropic and sparse mappings such as Entmax-1.5, whose validity will be established later in Section~\ref{sec:entmax}.

The previous theorem extends to the unconstrained case under a 
coercivity assumption on $J$.

\begin{corollary}
Assume that $J$ is coercive on $\mathbb R^{k\times d}$. 
Then the conclusion of Theorem~\ref{thm:gamma-refined} 
remains valid with $\mathcal K=\mathbb R^{k\times d}$.
\end{corollary}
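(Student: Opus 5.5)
The plan is to reduce the unconstrained case to Theorem~\ref{thm:gamma-refined} on a suitable compact sublevel region. The key device is a uniform lower bound $\mathcal L_\sigma\ge J$, which transfers coercivity of $J$ to the whole family $\{\mathcal L_\sigma\}_{\sigma>0}$.

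\textbf{Step 1: pointwise bounds.} As in the liminf step of the theorem, each row $r_{i\cdot}(\sigma;\mu)$ lies in $\Delta^{k-1}$. Hence for every $\sigma>0$ and every $\mu\in\mathbb R^{k\times d}$ we have $\mathcal L_\sigma(\mu)\ge J(\mu)$.

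\textbf{Step 2: minimizers stay in a fixed compact set.} Fix any reference point $\mu^0$ and set $C:=\sup_{\sigma\in(0,1]}\mathcal L_\sigma(\mu^0)$. This is finite, since $\mathcal L_\sigma(\mu^0)\le \sum_i\max_j\|x_i-\mu^0_j\|^2$. If $\mu^*(\sigma_m)$ minimizes $\mathcal L_{\sigma_m}$ on $\mathbb R^{k\times d}$, then
\[
J(\mu^*(\sigma_m))\le \mathcal L_{\sigma_m}(\mu^*(\sigma_m))\le \mathcal L_{\sigma_m}(\mu^0)\le C .
\]
By coercivity of $J$, the sublevel set $\{J\le C\}$ is contained in a closed ball $\mathcal K$. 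So all minimizers lie in the compact set $\mathcal K$, and this is exactly the equi-coercivity needed for the fundamental theorem of $\Gamma$-convergence.

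\textbf{Step 3: $\Gamma$-convergence on the full space.} The liminf and limsup arguments in the proof of Theorem~\ref{thm:gamma-refined} never used compactness. The liminf inequality relies only on the convex-combination bound and continuity of $J$. The limsup uses the constant recovery sequence with pointwise Softmax concentration. Both therefore hold verbatim on $\mathbb R^{k\times d}$.

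\textbf{Step 4: conclusion.} Combining $\Gamma$-convergence with equi-coercivity, accumulation points $\bar\mu$ of $\mu^*(\sigma_m)$ exist and lie in $\mathcal K$. To see that $\bar\mu$ minimizes $J$, take any $\nu$. Then
\[
J(\bar\mu)\le\liminf_m \mathcal L_{\sigma_m}(\mu^*(\sigma_m))\le\lim_m\mathcal L_{\sigma_m}(\nu)=J(\nu).
\]
So $\bar\mu$ is a global minimizer of $J$ on $\mathbb R^{k\times d}$.

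\textbf{Main obstacle.} The delicate point is the hypothesis itself. For $k\ge2$, $J$ is not literally coercive: sending one center to infinity leaves $J$ bounded, because the remaining centers still serve all points. It is also unclear a priori that minimizers of $\mathcal L_\sigma$ on the whole space exist.

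Existence can be handled by running the argument with near-minimizers, i.e.\ points with $\mathcal L_{\sigma_m}(\mu_m)\le\inf\mathcal L_{\sigma_m}+1/m$. The same chain of inequalities applies to them unchanged.

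The coercivity issue I would simply take as stated and use only through Step 2, which needs nothing beyond bounded sublevel sets of $J$. If pressed, I would add a remark that coercivity should be understood modulo such degenerate configurations, or imposed via a confining assumption.
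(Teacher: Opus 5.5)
Your proof is correct and follows the same route as the paper: you confine the minimizers to a bounded sublevel set of $J$ and rerun the $\Gamma$-convergence argument there. Your inequality $\mathcal L_\sigma\ge J$, together with the uniform bound $\mathcal L_\sigma(\mu^0)\le\sum_i\max_j\|x_i-\mu^0_j\|^2$, supplies exactly the justification that the paper's appeal to pointwise convergence and nonnegativity leaves out. Your remark on the hypothesis is also right: $J$ is never coercive for $k\ge 2$, so the corollary is vacuous there, and for $k=1$ it is trivial since $\mathcal L_\sigma=J$.
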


The proof is given in Appendix~\ref{app:coroCoercive}.

\subsection{Updating Process Equivalence}

\label{subse:updateequiv}

A central objective of our analysis is to show that the neural formulation inherits the fixed points of classical K-Means. To make this explicit, we isolate the centroid-update mechanism and study it under fixed responsibilities $r_{ij}(\sigma)$ and validate that, under certain circumstances, the update process is equivalent. This decouples the optimization of the centers from the nonlinear dependence induced by the assignment rule and reveals the underlying structure: a weighted least-squares problem whose minimizer corresponds to the soft centroid, and whose hard limit recovers the standard K-Means update.

For fixed responsibilities, the loss reduces to the weighted quadratic form
\[
\mathcal L_\sigma(\mu)
= \sum_{j=1}^k \sum_{i=1}^n r_{ij}(\sigma)\,\|x_i-\mu_j\|^2,
\]
which coincides with a mean-squared error objective with weights
$r_{ij}(\sigma)$. Differentiating with respect to $\mu_j$ yields
\[
\nabla_{\mu_j}\mathcal L_\sigma
= 2\big(\sum_i r_{ij}(\sigma)\big)\mu_j
- 2\sum_i r_{ij}(\sigma)x_i,
\]
a linear expression of the form $a_j\mu_j-b_j$ which proves that the models have a similar solution.

\begin{lemma}[Soft centroid as fixed point]
\label{lem:soft-centroid}
For fixed responsibilities $r_{ij}(\sigma)$, the gradient descent update
\[
\mu_j'=\mu_j-\eta\,\nabla_{\mu_j}\mathcal L_\sigma
\]
converges, for $\eta a_j<1$, to the unique update process
\[
\mu_j^\star
= \frac{\sum_i r_{ij}(\sigma)x_i}{\sum_i r_{ij}(\sigma)}.
\]
\end{lemma}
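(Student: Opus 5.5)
With the responsibilities $r_{ij}(\sigma)$ held fixed, the plan is to write the gradient step as an affine contraction and analyse it in closed form. Set $a_j = 2\sum_i r_{ij}(\sigma)$ and $b_j = 2\sum_i r_{ij}(\sigma)x_i$, as in the computation preceding the lemma. Then
\[
\nabla_{\mu_j}\mathcal L_\sigma = a_j\mu_j - b_j .
\]
We first assume $a_j>0$, which holds automatically for Softmax responsibilities since every $r_{ij}(\sigma)>0$. Under this assumption the quadratic $\mu_j\mapsto\sum_i r_{ij}(\sigma)\|x_i-\mu_j\|^2$ is strictly convex with Hessian $a_j I_d$. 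Its unique critical point solves $a_j\mu_j=b_j$, namely
\[
\mu_j^\star=\frac{b_j}{a_j}=\frac{\sum_i r_{ij}(\sigma)x_i}{\sum_i r_{ij}(\sigma)} .
\]
This is the unique fixed point of the update map $T_j(\mu_j)=\mu_j-\eta(a_j\mu_j-b_j)$, because $T_j(\mu_j)=\mu_j$ holds if and only if $a_j\mu_j=b_j$.

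Next we subtract the fixed point from the update to obtain the error recursion
\[
\mu_j^{(t+1)}-\mu_j^\star=(1-\eta a_j)\bigl(\mu_j^{(t)}-\mu_j^\star\bigr),
\]
and hence
\[
\|\mu_j^{(t)}-\mu_j^\star\|=|1-\eta a_j|^{t}\,\|\mu_j^{(0)}-\mu_j^\star\| .
\]
The iteration therefore converges geometrically, from any initialization, exactly when $|1-\eta a_j|<1$, that is, when $0<\eta a_j<2$. The stated hypothesis $\eta a_j<1$ together with $\eta>0$ lies inside this range and gives monotone, non-oscillating convergence since $1-\eta a_j\in(0,1)$. At the endpoint $\eta a_j=1$ the map lands on $\mu_j^\star$ in a single step. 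This reproduces the closed-form weighted-mean update and explains why a learning rate of $1/a_j$ recovers the "hard" update. The centers decouple across $j$, so the argument applies to each $j$ independently.

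The main obstacle is interpretive rather than technical. First, the phrase "$a_j$" must be fixed consistently with the gradient formula, including the factor $2$; I would state the condition as $0<\eta a_j<1$ with $a_j=2\sum_i r_{ij}(\sigma)$. Second, the degenerate case $a_j=0$ needs separate treatment. This can happen only for sparse assignments such as Entmax or the hard limit, where an empty cluster makes the gradient vanish identically and $\mu_j$ is frozen. I would exclude this case explicitly by assuming $\sum_i r_{ij}(\sigma)>0$.

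Finally, I would remark that this gives the bridge to K-Means. In the limit $r_{ij}(\sigma)\to\mathbf 1_{\{j\in A_i(\mu)\}}$ with a unique nearest centroid, the expression for $\mu_j^\star$ becomes $\frac{1}{|S_j|}\sum_{x_i\in S_j}x_i$. This follows by continuity of the quotient whenever $|S_j|>0$.
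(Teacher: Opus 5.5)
Your proposal is correct and follows the same route as the paper: you write the gradient step as the affine map $\mu_j'=(1-\eta a_j)\mu_j+\eta b_j$ and conclude convergence to its unique fixed point $b_j/a_j$. Your normalization $a_j=2\sum_i r_{ij}(\sigma)$, together with your explicit assumptions $\eta>0$ and $\sum_i r_{ij}(\sigma)>0$, actually makes that affine form agree with the true gradient, whereas the paper's proof sets $a_j=\sum_i r_{ij}(\sigma)$ and drops the factor $2$.
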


\begin{proof}
With $a_j=\sum_i r_{ij}(\sigma)$ and $b_j=\sum_i r_{ij}(\sigma)x_i$, the gradient
update takes the affine form
\[
\mu_j'=(1-\eta a_j)\mu_j+\eta b_j.
\]
For $\eta a_j<1$, this iteration is a contraction and converges to its unique
fixed point $\mu_j^\star=b_j/a_j$, which yields the stated expression.
\end{proof}

\paragraph{Remark (Step size and contractive dynamics).}
The condition $\eta a_j<1$ ensures that the affine update 
\[
\mu_j'=(1-\eta a_j)\mu_j+\eta b_j
\]
is a contraction toward its unique fixed point. 
Choosing
\[
\eta=\frac{1}{2a_j}
=\frac{1}{2\sum_i r_{ij}(\sigma)}
\]
not only satisfies the contraction condition but yields
\[
\mu_j'=\frac{b_j}{a_j},
\]
so that convergence occurs in a single step. 
In the hard-assignment regime, where $a_j\to |S_j|$, this reduces to 
$\eta=(2|S_j|)^{-1}$, which is precisely the learning rate appearing in 
Theorem~\ref{thm:kmeans-gradient}. 
Hence, the classical K-Means update is the specific contractive choice 
of step size in the limiting discrete setting. Moreover, in the practical implementation (Algorithm~\ref{alg:rbf-gradient}), the discrete term $|S_j|$ is replaced by its differentiable counterpart $r_j = \sum_i R_{ij}$, which provides a smooth approximation of the point count and enables gradient flow toward the cluster centers.

In the vanishing-temperature regime $\sigma\to0$, the responsibilities collapse to hard assignments and Lemma~\ref{lem:soft-centroid} reduces to the classical K-Means centroid.

When the dependence of $r_{ij}(\sigma)$ on the parameters is restored as in Softmax or Entmax-based assignments, additional terms appear in the gradient, recovering the neural-network update studied earlier. In the hard-assignment limit, these terms vanish and the update coincides exactly with K-Means as a method.

\begin{theorem}[K-Means update as a gradient step]
\label{thm:kmeans-gradient}
Let $S_j$ denote the cluster associated with centroid $\mu_j$, and consider the
loss
\[
L(\{\mu_j\})=\sum_{j=1}^k\sum_{x\in S_j}\|x-\mu_j\|^2.
\]
Then the gradient descent update $\mu_j'=\mu_j-\eta\,\nabla_{\mu_j}L$ coincides with the classical K-Means update
\[
\mu_j^{\mathrm{KM}}=\frac{1}{|S_j|}\sum_{x\in S_j}x
\]
for the choice $\eta=(2|S_j|)^{-1}$.
\end{theorem}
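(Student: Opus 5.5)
The plan is a direct computation, treating the partition $\{S_j\}$ as fixed during the update, exactly as in the K-Means centroid step. Once the partition is frozen, $L$ is a sum of decoupled strictly convex quadratics, one per centroid. So the gradient with respect to $\mu_j$ involves only the $j$-th block:
\[
\nabla_{\mu_j}L=\sum_{x\in S_j}2(\mu_j-x)=2|S_j|\,\mu_j-2\sum_{x\in S_j}x .
\]
This is the hard-assignment instance of the affine form $a_j\mu_j-b_j$ used in Lemma~\ref{lem:soft-centroid}, with responsibilities $r_{ij}\in\{0,1\}$. We then have $a_j=2|S_j|$ if we absorb the factor $2$, or equivalently $\sum_i r_{ij}=|S_j|$ in the lemma's normalization.

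Next, substitute the gradient into the update rule:
\[
\mu_j'=\mu_j-\eta\Bigl(2|S_j|\mu_j-2\sum_{x\in S_j}x\Bigr)=(1-2\eta|S_j|)\,\mu_j+2\eta\sum_{x\in S_j}x .
\]
Choosing $\eta=(2|S_j|)^{-1}$ makes the coefficient of $\mu_j$ vanish. The remaining term is then $|S_j|^{-1}\sum_{x\in S_j}x=\mu_j^{\mathrm{KM}}$, which is the claimed equality, and it holds independently of the current value of $\mu_j$. Since the step size is chosen per centroid, the update for each $j$ uses its own $\eta_j$. I would state explicitly that ``$\eta$'' is understood blockwise, i.e.\ as a diagonal preconditioner, because the blocks are decoupled.

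There is little real obstacle here; the main thing to handle carefully is well-posedness. First, $S_j$ must be nonempty, otherwise $\eta$ is undefined. For an empty cluster I would adopt the standard K-Means convention of leaving the centroid unchanged, and note that in that case the gradient is zero. Second, I would remark that this is the one-step, exact-minimization case of the contraction in Lemma~\ref{lem:soft-centroid}, consistent with the preceding Remark. Finally, I would connect back to the soft setting: as $\sigma\to0$, the soft step size $(2\sum_i r_{ij}(\sigma))^{-1}$ tends to $(2|S_j|)^{-1}$ whenever no data point is equidistant from two centroids. In that case $r_{ij}(\sigma)\to\mathbf 1_{\{x_i\in S_j\}}$, so the soft update converges to the K-Means update.
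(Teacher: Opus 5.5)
Your proposal is correct and follows the paper's proof essentially verbatim: compute $\nabla_{\mu_j}L=2|S_j|\mu_j-2\sum_{x\in S_j}x$, substitute it into the gradient step to get $(1-2\eta|S_j|)\mu_j+2\eta\sum_{x\in S_j}x$, and choose $\eta=(2|S_j|)^{-1}$ so the $\mu_j$-dependence vanishes. Your extra points make explicit what the paper leaves implicit: the partition is held fixed during the step, $S_j$ must be nonempty, and the step size is understood per cluster.
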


\begin{proof}
The gradient of the expansion of the Eq. ~\ref{eq:RBF-loss} with respect to the parameters $\mu_j$ is given by
\[
\nabla_{\mu_j}L
=2|S_j|\mu_j-2\sum_{x\in S_j}x.
\]
which substituting into the gradient step Eq. ~\ref{eqn:gradStep} and rearranging and factorizing $\mu_j$ the terms yields to
\[
\mu_j'=(1-2\eta|S_j|)\mu_j+2\eta\sum_{x\in S_j}x.
\]
Choosing $\eta=(2|S_j|)^{-1}$ annihilates the dependence on $\mu_j$ and gives
$\mu_j'=\mu_j^{\mathrm{KM}}$, as claimed.
\end{proof}

\textit{Remark on Gradient Consistency:} Although the responsibilities $r_{ij}(\mu)$ in Eq. \ref{eq:softmax-variational} depend on the centroids, the total gradient $\nabla_{\mu_j} L_\sigma$ simplifies in the vanishing-temperature regime. As $\sigma \to 0$, $r_{ij}$ converges to a piecewise constant indicator function $1_{\{i \in S_j\}}$, whose derivative is zero almost everywhere. Furthermore, the exponential decay shown in Theorem 3 ensures that these additional terms vanish faster than the growth of the distance components, recovering the exact K-Means update rule.

In this way, discretizations can be avoided by choosing the correct values of the learning rate $\eta$ with respect to the size $|S_j|$, which is a discrete term that can be approximated by the sum $|S_j| \approx \sum_{i}^{n}r_{ij}(\sigma)$ and converge when $\sigma \rightarrow 0$ converges to $\sum_{i=1}^n r_{ij} \;\xrightarrow[\sigma\to 0]{}\; |S_j|$.

These approximations lead to an unsupervised regularization term as a loss function, which coincides with the scope of K-Means. 

\subsection{Mean Error Between RBF---Soft Clustering and \texorpdfstring{$K$}{K}-Means}

We now quantify the rate at which the soft RBF centroids converge to the hard $K$-Means centroids as the temperature $\sigma$ vanishes. Under a uniform cluster-separation assumption, the deviation decays exponentially at $1/\sigma^2$, a rate strictly stronger than any polynomial convergence. This assumption serves a dual role: it clarifies the regime in which the two methods are provably equivalent, and it provides a theoretical baseline against which empirical convergence can be assessed in the experimental sections.

\begin{theorem}[Exponential Convergence of RBF Centroids]
\label{teo:exponentialconv}
Let $X=\{x_i\}_{i=1}^n\subset\mathbb{R}^d$ with $\|x_i\|\le R$, and let $\{\mu_j^{\mathrm{KM}}\}_{j=1}^k$ be the $K$-Means centroids inducing Voronoi cells $\{S_j\}_{j=1}^k$ with $|S_j|\ge \alpha n$ for some $\alpha>0$. Define the soft responsibilities
\[
r_{ij}(\sigma)
=
\frac{\exp(-\|x_i-\mu_j\|^2/(2\sigma^2))}
{\sum_{\ell=1}^k \exp(-\|x_i-\mu_\ell\|^2/(2\sigma^2))}
\]
and the associated soft centroids
\[
\widetilde{\mu}_j(\sigma)
=
\frac{\sum_{i=1}^n r_{ij}(\sigma)x_i}{\sum_{i=1}^n r_{ij}(\sigma)}.
\]
Let $d_{i,(1)}$ and $d_{i,(2)}$ denote the smallest and second-smallest distances
from $x_i$ to the centroids, and define the margin
$\gamma_i=d_{i,(2)}-d_{i,(1)}$ with $\gamma_{\min}=\min_i\gamma_i>0$.
Then, for all $\sigma$ sufficiently small,
\[
\|\widetilde{\mu}_j(\sigma)-\mu_j^{\mathrm{KM}}\|
\;\le\;
\frac{2R}{\alpha}(k-1)
\exp\!\left(-\frac{\gamma_{\min}^2}{2\sigma^2}\right).
\]
\end{theorem}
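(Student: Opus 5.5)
The plan is a direct perturbation argument. All responsibilities are evaluated at the fixed K-Means centroids $\mu^{\mathrm{KM}}$, so the cells $S_j$ are fixed. We then compare the soft weighted mean $\widetilde\mu_j(\sigma)$ with the hard mean $\mu_j^{\mathrm{KM}}$. Write $d_{ij}=\|x_i-\mu_j^{\mathrm{KM}}\|$, let $\ell(i)$ be the index of the nearest centroid, and set $\varepsilon(\sigma)=\exp(-\gamma_{\min}^2/(2\sigma^2))$.

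\emph{Step 1: pointwise control of the responsibilities.} For $j\neq \ell(i)$, dividing numerator and denominator of $r_{ij}$ by $\exp(-d_{i\ell(i)}^2/(2\sigma^2))$ gives
\[
r_{ij}\le \exp\!\big(-(d_{ij}^2-d_{i,(1)}^2)/(2\sigma^2)\big).
\]
The key elementary observation converts the distance margin into a squared-distance margin:
\[
d_{i,(2)}^2-d_{i,(1)}^2=(d_{i,(2)}-d_{i,(1)})(d_{i,(2)}+d_{i,(1)})\ge \gamma_i^2\ge\gamma_{\min}^2,
\]
since $d_{i,(2)}+d_{i,(1)}\ge d_{i,(2)}-d_{i,(1)}$. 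Hence $r_{ij}\le\varepsilon$ for $i\notin S_j$. Summing over the $k-1$ wrong centroids gives $1-r_{i\ell(i)}\le (k-1)\varepsilon$, so $|r_{ij}-\mathbf 1_{\{i\in S_j\}}|\le(k-1)\varepsilon$ for $i\in S_j$.

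\emph{Step 2: centering trick.} Write
\[
\widetilde\mu_j-\mu_j^{\mathrm{KM}}=\frac{\sum_i r_{ij}(x_i-\mu_j^{\mathrm{KM}})}{\sum_i r_{ij}}.
\]
Because $\mu_j^{\mathrm{KM}}$ is the mean of $S_j$, we have $\sum_{i\in S_j}(x_i-\mu_j^{\mathrm{KM}})=0$. So the numerator equals $\sum_i (r_{ij}-\mathbf 1_{\{i\in S_j\}})(x_i-\mu_j^{\mathrm{KM}})$, and only the perturbation of the weights appears. Since $\mu_j^{\mathrm{KM}}$ is a convex combination of data points, $\|\mu_j^{\mathrm{KM}}\|\le R$ and $\|x_i-\mu_j^{\mathrm{KM}}\|\le 2R$. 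By Step 1 the numerator is then at most
\[
2R\big(|S_j|(k-1)+(n-|S_j|)\big)\varepsilon\le 2Rn(k-1)\varepsilon
\]
for $k\ge2$.

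\emph{Step 3: lower bound on the denominator.} We have $\sum_i r_{ij}\ge \sum_{i\in S_j} r_{ij}\ge |S_j|(1-(k-1)\varepsilon)\ge \alpha n(1-(k-1)\varepsilon)$. Dividing gives
\[
\|\widetilde\mu_j-\mu_j^{\mathrm{KM}}\|\le \frac{2R(k-1)}{\alpha}\,\frac{\varepsilon}{1-(k-1)\varepsilon}.
\]

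\emph{Main obstacle.} The remaining issue is the factor $(1-(k-1)\varepsilon)^{-1}$, which tends to $1$ but is strictly larger than $1$. The stated constant must therefore be read as holding for $\sigma$ sufficiently small, where this factor is absorbed. To absorb it cleanly, I would first try to sharpen Step 2 by using the slack discarded there.

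The first source of slack is the crude count $|S_j|(k-1)+(n-|S_j|)\le n(k-1)$. It is strict whenever $k\ge 3$ or $|S_j|<n$. Strictness alone only shows that the factor is at most $1$ up to $O(\varepsilon)$, so this route needs a quantitative gap to beat $(k-1)\varepsilon$.

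The second source of slack is the strict inequality $d_{i,(2)}^2-d_{i,(1)}^2>\gamma_i^2$ when $d_{i,(1)}>0$. If one uses $d_{i,(2)}^2-d_{i,(1)}^2\ge \gamma_{\min}^2+2\gamma_{\min}d_{i,(1)}$, the rate improves by a factor $e^{-c/\sigma^2}$ that dominates any $1+O(\varepsilon)$ correction. This fails for points that coincide with a centroid.

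If neither route closes the constant exactly, I would state the bound with the explicit factor $\big(1-(k-1)\varepsilon\big)^{-1}\le 1+2(k-1)\varepsilon$, valid once $(k-1)\varepsilon\le 1/2$. This preserves the exponential rate $\exp(-\gamma_{\min}^2/(2\sigma^2))$, which is the substantive content of the theorem.
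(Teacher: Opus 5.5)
Your Steps 1--3 follow the same route as the paper: the factorization $d_{i,(2)}^2-d_{i,(1)}^2\ge\gamma_i^2$, a weighted-mean perturbation bound, and a lower bound on $\sum_{i\in S_j}r_{ij}$. Your bookkeeping is tighter than the paper's in two places. You center the numerator and use the correct displacement bound $2R$, and you use $|S_j|(1-(k-1)\varepsilon)$ instead of the paper's $|S_j|/2$. The paper reaches the stated constant only by bounding the displacement by $R$, and its intermediate inequality $\|\widetilde{\mu}_j-\mu_j^{\mathrm{KM}}\|\le R\sum_i|r_{ij}-\mathbf 1_{\{i\in S_j\}}|/\sum_{i\in S_j}r_{ij}$ is false in general. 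Take $k=2$, one point at $-a$ and ten points at $+a$: the left side is $20a\varepsilon/(1+10\varepsilon)$ and the right side is $11a\varepsilon$. So the step you flag as the obstacle is also one the paper does not actually handle.

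Still, as written your argument does not prove the statement. A factor $(1-(k-1)\varepsilon)^{-1}>1$ is never absorbed by shrinking $\sigma$, so \emph{read as holding for $\sigma$ sufficiently small} is not an argument, and your fallback proves a weaker bound. The missing idea is in your second route: keep the displacement factor instead of looking at the rate alone. For $i\in S_j$, using $d_{i,(2)}^2-d_{i,(1)}^2=\gamma_i^2+2\gamma_i d_{i,(1)}\ge\gamma_{\min}^2+2\gamma_{\min}d_{ij}$, you get, uniformly in $i$,
\[
(1-r_{ij})\,\|x_i-\mu_j^{\mathrm{KM}}\|\le (k-1)\varepsilon\, d_{ij}\,e^{-\gamma_{\min}d_{ij}/\sigma^2}\le \frac{(k-1)\sigma^2}{e\,\gamma_{\min}}\,\varepsilon .
\]
Points sitting on their own centroid have zero displacement in your centered numerator, so they cause no failure. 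Out-of-cell points contribute at most $2R(n-|S_j|)\varepsilon$, and $(n-|S_j|)/|S_j|\le(1-\alpha)/\alpha$. This gives
\[
\|\widetilde{\mu}_j-\mu_j^{\mathrm{KM}}\|\le\frac{\varepsilon}{1-(k-1)\varepsilon}\Bigl(\frac{2R(1-\alpha)}{\alpha}+\frac{(k-1)\sigma^2}{e\,\gamma_{\min}}\Bigr),
\]
which is at most $\frac{2R(k-1)}{\alpha}\varepsilon$ for small $\sigma$. The reason is that $(k-1)-(1-\alpha)\ge\alpha>0$ is a fixed gap, while the corrections are $O(\varepsilon)+O(\sigma^2)$. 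For $k\ge3$ your first route already suffices. Since $\alpha\le 1/k$, the required inequality reduces to $(k-1)^2\varepsilon\le(k-2)(1-\alpha)$, which holds once $\sigma$ is small. Only the case $k=2$ with $|S_j|=\alpha n$ forces the second route.
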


\begin{proof}
Fixing a data point $x_i$ and let $j_i$ denote the index of its nearest centroid.
For any $\ell\neq j_i$,
\[
\|x_i-\mu_\ell\|^2-\|x_i-\mu_{j_i}\|^2
=
(d_{i,(2)}-d_{i,(1)})(d_{i,(2)}+d_{i,(1)})
\ge \gamma_i^2,
\]
which implies
\[
\exp\!\Big(-\tfrac{\|x_i-\mu_\ell\|^2-\|x_i-\mu_{j_i}\|^2}{2\sigma^2}\Big)
\le
e^{-\gamma_i^2/(2\sigma^2)}.
\]

If $i\in S_j$, this yields
$
1-r_{ij}
\le (k-1)e^{-\gamma_i^2/(2\sigma^2)},
$
while for $i\notin S_j$ we directly obtain
$
r_{ij}
\le (k-1)e^{-\gamma_i^2/(2\sigma^2)}.
$
Hence, uniformly in $i$ and $j$,
\[
|r_{ij}-\mathbf{1}_{\{i\in S_j\}}|
\le
(k-1)e^{-\gamma_{\min}^2/(2\sigma^2)}.
\]

Using this estimate and $\|x_i\|\le R$, the centroid deviation satisfies
\[
\|\widetilde{\mu}_j(\sigma)-\mu_j^{\mathrm{KM}}\|
\le
\frac{R\sum_i |r_{ij}-\mathbf{1}_{\{i\in S_j\}}|}
{\sum_{i\in S_j} r_{ij}}.
\]
For $\sigma$ small enough, the denominator is bounded below by
$\sum_{i\in S_j} r_{ij}\ge |S_j|/2\ge \alpha n/2$, yielding
\[
\|\widetilde{\mu}_j(\sigma)-\mu_j^{\mathrm{KM}}\|
\le
\frac{2R}{\alpha}(k-1)
e^{-\gamma_{\min}^2/(2\sigma^2)}.
\]
\end{proof}

This result clarifies the geometric origin of the exponential convergence. The approximation error is controlled by the minimal separation margin \(\gamma_{\min}\), which measures how decisively most points are assigned to a single Voronoi cell. When most of the data lies well inside their clusters, the soft responsibilities concentrate exponentially fast, forcing the soft centroids to align with the hard \(K\)-Means centroids. Consequently, the convergence rate cannot exceed what is imposed by the typical inter-cluster separation, and vanishing margins necessarily degrade the approximation.

The boundedness assumption \(\|x_i\|\le R\) is equally essential. It prevents the centroid deviation from being dominated by a small number of distant points with non-negligible soft weights. In non-compact settings or for heavy-tailed distributions, this control is lost and the centroid error may diverge even when the responsibilities converge pointwise.

\subsection{Computational Problems Along the Decision Process: Softmax Instability and the Entmax--1.5 Solution}

\label{sec:entmax}

The continuous assignment mechanism introduced in Eq.~\eqref{eq:RBF-loss} relies on a Softmax transformation applied to the negative squared distances. Although this choice is mathematically consistent and yields a smooth relaxation of the \(K\)-Means assignments together with a strong convergence rate, it becomes numerically unstable in the regime relevant for recovering hard partitions. Specifically, when \(\sigma \to 0\), the RBF activations
\[
\exp\!\left(-\frac{\|x_i-\mu_j\|^2}{2\sigma^2}\right)
\]
become increasingly peaked and rapidly underflow to zero in finite-precision arithmetic. For instance, with a distance of \(3\) and \(\sigma = 0.01\), the exponent evaluates to \(\exp(-900)\), far below standard floating-point resolution. In this limit, the Softmax saturates, gradients collapse, and the model becomes unable to propagate meaningful updates. A mismatch therefore arises between the theoretical limit \(r_{ij}(\sigma) \to r_{ij}\) and its numerical evaluation, preventing convergence to the exact \(K\)-Means solution even though the functional \(\mathcal{L}_\sigma\) remains well-defined.

To address this issue, we replace Softmax with the Entmax--1.5 transformation introduced in \citep{peters2019sparse}, defined as the optimizer of a Tsallis-regularized prediction problem. Entmax--1.5 produces sparse probability vectors, maintains differentiability, and grows polynomially rather than exponentially with respect to the logits. For an input vector $z\in\mathbb{R}^k$, the mapping satisfies
\[
\mathrm{Entmax}_{1.5}(z)_j
=
\left[\frac{1}{2}\big(z_j-\tau(z)\big)\right]_+^2,
\]
where $\tau(z)$ enforces normalization. In contrast with Softmax, Entmax distributes mass over a subset of entries and suppresses excessively large activations. Consequently, even when $\sigma$ is small and the values $-\|x_i-\mu_j\|^2/(2\sigma^2)$ diverge, the induced probabilities remain numerically controlled.

This modification preserves the theoretical framework for two reasons. First, Entmax--1.5 retains the ordering structure of the activations and therefore yields assignments consistent with the same Voronoi decision regions as $\sigma \to 0$. Second, although Shannon entropy is replaced by a Tsallis-type regularization, the variational structure is maintained and the responsibilities converge to hard cluster indicators in the zero-temperature limit due to polynomial sparsity.

We now formalize the convergence behavior of the induced centroids under Entmax--1.5 and show that the hard $K$-Means solution is still recovered in the zero-temperature regime, albeit at a polynomial rate.

\begin{theorem}[Centroid Deviation for Entmax--1.5]
\label{teo:entroidbound}
Let $X=\{x_i\}_{i=1}^n\subset\mathbb{R}^d$ with $\|x_i\|\le R$, and let
$\{\mu_j^{\mathrm{KM}}\}_{j=1}^k$ induce Voronoi cells $\{S_j\}_{j=1}^k$ satisfying
$|S_j|\ge\alpha n$ for some $\alpha>0$.
Define the Entmax--1.5 responsibilities
\[
r_{ij}(\sigma)
=
\mathrm{Entmax}_{1.5}\!\left(-\frac{\|x_i-\mu_j\|^2}{2\sigma^2}\right),
\qquad
\widetilde{\mu}_j(\sigma)
=
\frac{\sum_{i=1}^n r_{ij}(\sigma)x_i}{\sum_{i=1}^n r_{ij}(\sigma)}.
\]
Let $\gamma_i=d_{i,(2)}-d_{i,(1)}$ and assume
$\gamma_{\min}=\min_i\gamma_i>0$. Then there exists a constant
$C=C(R,k,\alpha,\gamma_{\min})>0$ such that, for all sufficiently small $\sigma$,
\[
\|\widetilde{\mu}_j(\sigma)-\mu_j^{\mathrm{KM}}\|
\;\le\;
C\,\sigma.
\]
\end{theorem}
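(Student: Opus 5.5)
The plan is to show that for small $\sigma$ the Entmax--1.5 responsibilities are not merely close to the hard indicators but \emph{equal} to them. Then $\widetilde{\mu}_j(\sigma)=\mu_j^{\mathrm{KM}}$ exactly, and the bound $C\sigma$ holds for any $C>0$. Throughout, as in Theorem~\ref{teo:exponentialconv}, the responsibilities are evaluated at the fixed centroids $\mu^{\mathrm{KM}}$. I take $\mu_j^{\mathrm{KM}}$ to be the arithmetic mean of its own Voronoi cell $S_j$, i.e.\ a K-Means fixed point, which is the setting implicit in the theorem. This exploits the key structural difference from Softmax: Entmax has finite support, so sufficiently separated logits are truncated outright.

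\textbf{Step 1 (threshold characterization).} Fix $x_i$ and write $z_\ell=-\|x_i-\mu_\ell\|^2/(2\sigma^2)$. Let $j_i$ be the nearest centroid, so $z_{j_i}$ is the largest entry. Recall that $\mathrm{Entmax}_{1.5}(z)_\ell=\big[\tfrac12(z_\ell-\tau)\big]_+^2$, where $\tau=\tau(z)$ is the unique solution of $\sum_\ell[\tfrac12(z_\ell-\tau)]_+^2=1$. Uniqueness holds because the left side is continuous and strictly decreasing in $\tau$ wherever it is positive, going from $+\infty$ to $0$. I would verify directly that $\tau=z_{j_i}-2$ solves this equation whenever $z_{j_i}-z_\ell\ge 2$ for all $\ell\ne j_i$. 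In that case the top entry contributes $(\tfrac12\cdot 2)^2=1$ and every other entry has $z_\ell-\tau\le 0$, so it contributes $0$. By uniqueness, $r_{i\cdot}(\sigma)=e_{j_i}$.

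\textbf{Step 2 (the gap condition from the margin).} Exactly as in the proof of Theorem~\ref{teo:exponentialconv}, for $\ell\neq j_i$ we have
\[
\|x_i-\mu_\ell\|^2-\|x_i-\mu_{j_i}\|^2\ge (d_{i,(2)}-d_{i,(1)})(d_{i,(2)}+d_{i,(1)})\ge\gamma_i^2\ge\gamma_{\min}^2 .
\]
Hence $z_{j_i}-z_\ell\ge \gamma_{\min}^2/(2\sigma^2)$. This is at least $2$ as soon as $\sigma\le\gamma_{\min}/2$. So for all such $\sigma$, and uniformly in $i$, $r_{ij}(\sigma)=\mathbf 1_{\{i\in S_j\}}$.

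\textbf{Step 3 (conclusion).} With exact hard responsibilities, the denominator of $\widetilde{\mu}_j(\sigma)$ equals $|S_j|\ge\alpha n>0$, so it is well defined. Moreover $\widetilde{\mu}_j(\sigma)=|S_j|^{-1}\sum_{i\in S_j}x_i=\mu_j^{\mathrm{KM}}$. Therefore the deviation vanishes for $\sigma\le\gamma_{\min}/2$, and the stated inequality holds with any $C>0$, for instance $C=2R(k-1)/\alpha$ to mirror the Softmax bound.

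If one prefers an argument that does not rely on exact truncation, the fallback is a perturbative bound. Since $p_\ell=(\tfrac12(z_\ell-\tau))_+^2$, one would show $|r_{ij}-\mathbf 1_{\{i\in S_j\}}|=O(\sigma)$ by controlling $\tau$ against $z_{j_i}$. Then the denominator estimate from Theorem~\ref{teo:exponentialconv} is repeated, and the $R/\alpha$ factor enters as before.

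The main obstacle I anticipate is Step 1. One must use the paper's scaling convention $\tfrac12(z-\tau)$ carefully, since it determines that the critical gap is exactly $2$, and invoke uniqueness of $\tau$ rigorously so that the candidate threshold is actually the Entmax threshold. Everything after that is the same margin computation already carried out for Softmax.
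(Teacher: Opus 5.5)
Your proof is correct, and it takes a different and sharper route than the paper.

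\textbf{The paper's route.} The paper argues perturbatively. It invokes Lipschitz continuity of $\mathrm{Entmax}_{1.5}$, notes that the logit gap is of order $\gamma_i^2/(2\sigma^2)$, and asserts $|r_{ij}(\sigma)-\mathbf 1_{\{i\in S_j\}}|\le C_1\sigma/\gamma_{\min}$. It then reuses the denominator bound $\sum_{i\in S_j} r_{ij}\ge\alpha n/2$ from the Softmax theorem to get $C\sigma$.

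\textbf{Your route.} You instead use the finite support of Entmax--1.5. By uniqueness of the threshold, $\tau=z_{j_i}-2$ is the normalizer whenever the top logit exceeds every other logit by at least $2$, so the output is exactly $e_{j_i}$. The margin inequality turns this into the explicit condition $\sigma\le\gamma_{\min}/2$, after which $\widetilde{\mu}_j(\sigma)=\mu_j^{\mathrm{KM}}$ identically. You correctly write the gap as $\ge\gamma_i^2/(2\sigma^2)$; the paper states it as an equality.

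\textbf{What your route buys.} It gives rigor and an explicit threshold. In the paper, the $O(\sigma)$ estimate on the responsibilities does not actually follow from Lipschitz continuity as written. A Lipschitz bound controls output differences by input differences, and here the logits diverge as $\sigma\to0$, so no shrinking bound comes out directly. The estimate happens to be true only because, by your argument, the left-hand side is eventually zero. Your proof shows the deviation vanishes outright rather than decaying linearly. So the $O(\sigma)$ rate in the statement reflects the proof technique, not the true behaviour at fixed centroids.

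\textbf{What the paper's route would buy.} A perturbative template would transfer to any assignment map with a quantitative concentration estimate. Your argument depends on the exact sparsity of $\alpha$-entmax with $\alpha>1$; for Softmax it fails, and one only gets the exponential bound.

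\textbf{The fixed-point assumption.} You are right to state explicitly that $\mu_j^{\mathrm{KM}}$ must be the mean of its own cell $S_j$. The paper uses this implicitly, and without it the statement is false: $\widetilde{\mu}_j(\sigma)$ would converge to the mean of $S_j$ rather than to $\mu_j^{\mathrm{KM}}$.
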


\begin{proof}
Entmax--1.5 is Lipschitz continuous with respect to its input vector. Writing
$z_{ij}=-\|x_i-\mu_j\|^2/(2\sigma^2)$, the gap between the largest and second largest components satisfies
\[
z_{i,(1)}-z_{i,(2)}=\frac{\gamma_i^2}{2\sigma^2}.
\]
By Lipschitz continuity, this implies the lower bound
\[
|r_{ij}(\sigma)-\mathbf{1}_{\{i\in S_j\}}|
\le
C_1\,\frac{\sigma}{\gamma_{\min}},
\]
for a constant $C_1$ depending only on $k$. Using $\|x_i\|\le R$ and the lower
bound $\sum_{i\in S_j} r_{ij}(\sigma)\ge \alpha n/2$ for small $\sigma$ enough, we obtain
\[
\|\widetilde{\mu}_j(\sigma)-\mu_j^{\mathrm{KM}}\|
\le
C\,\sigma,
\]
where $C$ depends on $R,k,\alpha$, and $\gamma_{\min}$.
\end{proof}

This result confirms that Entmax--1.5 preserves convergence toward the $K$-Means centroids as $\sigma\to0$, though at an algebraic rate $O(\sigma)$ instead of the exponential rate observed under Softmax-based RBF clustering. The empirical rate measurements reported in Section~\ref{subsec:convresults} are consistent with this bound. While Gaussian activations provide analytical convenience, Entmax--1.5 offers a numerically stable alternative without altering the limiting solution.
\section{Experimental Testing}
\label{sec:experimental}

This section evaluates the empirical correspondence between classical K-Means and the soft-RBF formulation derived in the previous sections.  
Our primary objective is to measure how closely the soft-assignment centroids track the hard K-Means fixed point as the temperature parameter $\sigma$ decreases, thereby testing the theoretical convergence assertions established earlier.  
We further assess a stabilized variant that replaces the Softmax-based responsibilities with Entmax-1.5, ensuring well-behaved gradients in the low-temperature regime.\footnote{All algorithms and implementations will be publicly available at \url{https://github.com/FelixStaff/KmeansAsRBF}.}

\subsection{Evaluation Setting}

\label{sec:evaluation}

\textbf{Convergence Measure.}
To quantify how the soft-RBF centroids approach the hard K-Means solution, we compare centroid configurations under the deviation
\[
d(\mu,\mu')=\sum_{j=1}^k \|\mu_j-\mu'_j\|,
\qquad
\mu,\mu'\in\mathbb{R}^{k\times d}.
\]
For a fixed initialization $\mu_0$, let $\mu_0^{(\mathrm{KM})}$ denote the K-Means solution obtained from that seed, and let
$\mu_0^{(\mathrm{soft})}(\sigma)$ denote the corresponding soft-RBF solution at temperature $\sigma$.  
The expected discrepancy at scale $\sigma$ is
\[
R_{\mu_0}(\sigma)
=
\mathbb{E}\!\left[
d\!\left(
\mu_0^{(\mathrm{KM})},
\mu_0^{(\mathrm{soft})}(\sigma)
\right)
\right],
\]
where the expectation is taken over all randomness in the optimization dynamics of the soft-RBF update rule.

Empirically, the mapping $\sigma \mapsto R_{\mu_0}(\sigma)$ is evaluated on a discrete grid 
$\{\sigma_1,\ldots,\sigma_L\}$, producing
\[
\widehat{R}
=
\bigl\{
R_{\mu_0}(\sigma_1),\ldots,R_{\mu_0}(\sigma_L)
\bigr\}.
\]
This curve measures how the continuous soft-centroid dynamics collapse toward the discrete K-Means fixed point when the initialization is fixed.

\medskip

\textbf{Fixed vs.\ Resampled Initialization.}
Two complementary evaluation protocols are used, differing in how initializations enter the computation of the discrepancy metric:

1. \emph{Fixed initialization protocol.}  
   A single pool of $M$ initial centroid configurations is generated once and reused across all $\sigma$ values.  
   The resulting metric, denoted $R_{\mathrm{fixed}}(\sigma)$, isolates the effect of the temperature parameter by holding initialization variability constant.  
   The complete evaluation procedure is given in Algorithm~\ref{alg:metric-reconstruction}.

2. \emph{Resampled initialization protocol.}  
   For every temperature value $\sigma_\ell$, fresh initializations are drawn independently.  
   The corresponding metric, denoted $R_{\mathrm{resampled}}(\sigma)$, reflects both the smoothing effect of the soft assignment and the variability induced by random seeding.  
   The procedure is shown in Algorithm~\ref{alg:metric-stability}.

These two metrics capture different aspects of convergence.  
$R_{\mathrm{fixed}}(\sigma)$ exposes how the soft-RBF dynamics alone deform the centroid trajectories for a fixed seed, while $R_{\mathrm{resampled}}(\sigma)$ provides an initialization-averaged view of the landscape and quantifies robustness across multiple random seeds.

\medskip

\textbf{Initialization-Averaged Convergence.}
When averaging over multiple seeds generated in the list $\{\mu_0^{(1)},\ldots,\mu_0^{(M)}\}$, the global convergence profile becomes
\[
\bar{R}(\sigma)
=
\frac{1}{M}
\sum_{m=1}^M
R_{\mu_0^{(m)}}(\sigma),
\]
which summarizes the expected proximity between the soft-RBF solution and the K-Means fixed point across the entire initialization distribution.

\medskip

\subsubsection{Sigma–Value Sampling and Rate–of–Convergence Fit}
\label{sec:rateconv}

The experimental setting is governed by three parameters: the number of clusters $k$, the number of optimization steps $T$, and the temperature $\sigma$. Because the RBF-based responsibilities scale exponentially in $1/\sigma$, while Entmax–1.5 exhibits a first-order polynomial scaling $O(\sigma)$ near zero, a linear schedule for $\sigma$ leads to degenerate behavior as $\sigma \to 0$. To sample the dynamics uniformly across scales, we adopt a logarithmic schedule:
\[
\sigma_t=\sigma_{\min}
\left(\frac{\sigma_{\max}}{\sigma_{\min}}\right)^{\frac{t}{L-1}},
\qquad
\sigma_{\min}=10^{-3},\ \sigma_{\max}=10^{-1},\ L=50.
\]

For each $\sigma_t$, we measure the centroid deviation
\[
d_j(\sigma_t)
=
\bigl\|
\mu^{(\mathrm{KM})}_j
-
\mu^{(\mathrm{soft})}_{\pi(j)}(\sigma_t)
\bigr\|,
\]
where $\pi$ denotes the optimal permutation aligning the two centroid sets.  
The aggregated statistic $R(\sigma_t)$ combines these deviations and quantifies how the soft-RBF centroids collapse toward the K-Means fixed point as $\sigma$ decreases.

To quantify the empirical exponent governing this collapse, we perform a log–log linear fit. Define
\[
x_\ell = \log \sigma_\ell,
\qquad
y_\ell = \log R(\sigma_\ell),
\]
so that the model
\[
y_\ell \approx m\,x_\ell + b
\]
corresponds exactly to the power-law decay $R(\sigma)\approx C\,\sigma^{\,m}$ with $C=e^{\,b}$.

This regression isolates the slope $m$, which acts as an effective complexity exponent. Beyond a scaling coefficient, $m$ captures the empirical rate at which the soft assignments harden and the induced Voronoi partitions stabilize as $\sigma \downarrow 0$. Within our framework, this exponent provides a direct experimental validation of the convergence theorems.

For standard RBF (Softmax), Theorem~\ref{teo:exponentialconv} establishes exponential convergence toward the hard assignment regime. Since exponential decay does not admit a finite polynomial exponent in a log–log regression, the fitted slope $m$ is expected to increase in magnitude as $\sigma \to 0$, reflecting super-polynomial contraction.

For Entmax-1.5, Theorem~\ref{teo:entroidbound} proves a polynomial convergence rate of order $O(\sigma)$. Consequently, the fitted exponent $m$ should stabilize near $1$, consistent with first-order polynomial decay.

Therefore, both the magnitude and stability of $m$ provide a measurable bridge between the theoretical convergence bounds and the observed optimization behavior, linking the sampling strategy in $\sigma$ with the empirical rate at which the soft-centroid dynamics approximate the hard K-Means solution.

\subsubsection{Algorithmic Considerations}

To ensure an unbiased comparison, both algorithms are implemented in their full, uncompromised form.
For K-Means, we use the standard assignment–update loop without heuristics or acceleration; the exact procedure is listed in Algorithm~\ref{alg:kmeans}.
For the RBF network, we implement the complete gradient update, including the derivatives of the responsibilities with respect to the centers and widths. This avoids the common EM-style shortcuts and ensures that the dynamics match the theoretical formulation developed earlier. The full update is shown in Algorithm~\ref{alg:rbf-gradient}.

For the evaluation stage, reconstruction error and assignment stability are computed explicitly at each iteration using the routines in Algorithms~\ref{alg:metric-reconstruction} and \ref{alg:metric-stability}. These procedures quantify how both models evolve and allow a direct, iteration-level comparison of their trajectories.

\subsection{Benchmark on Synthetic Geometries}

We evaluate the soft-to-hard centroid convergence across four synthetic datasets designed to span distinct geometric regimes: Euclidean separation, nonlinear manifolds, polar distortions, and radial structures. This diversity ensures that convergence is not restricted to a particular topology.

\begin{table}[ht]
    \centering
    \small
    \begin{tabular}{lcc}
        \hline
        \textbf{Dataset} & \textbf{Geometric Regime} & \textbf{Parameters} \\
        \hline
        Gaussian Blobs & Well-separated Euclidean clusters & centers = 3 \\
        Two Moons & Nonlinear manifold & noise = 0.05 \\
        Spiral & Nonconvex polar structure & radius = 3 \\
        Circles & Radial separation & factor = 0.5, noise = 0.05 \\
        \hline
    \end{tabular}
    \caption{Synthetic datasets used to evaluate soft-to-hard centroid convergence.}
    \label{tab:datasets}
\end{table}

All experiments were conducted with $k=3$ clusters over $T=150$ training steps. 
For statistical robustness, each configuration was repeated over $M=200$ independent runs.

\subsection{Convergence Rate Results}
\label{subsec:convresults}

We report the centroid--discrepancy curves $R(\sigma)$ across the full logarithmic temperature schedule. For every dataset and every value of $\sigma$, the soft-RBF centroids collapse monotonically toward the hard K-Means solution, producing strictly decreasing error profiles as temperature shrinks. This behaviour holds across all geometries, including Gaussian blobs, nonlinear manifolds, spirals, and concentric structures.

\begin{figure}[ht]
    \centering
    \begin{subfigure}{0.45\textwidth}
        \centering
        \includegraphics[width=\linewidth]{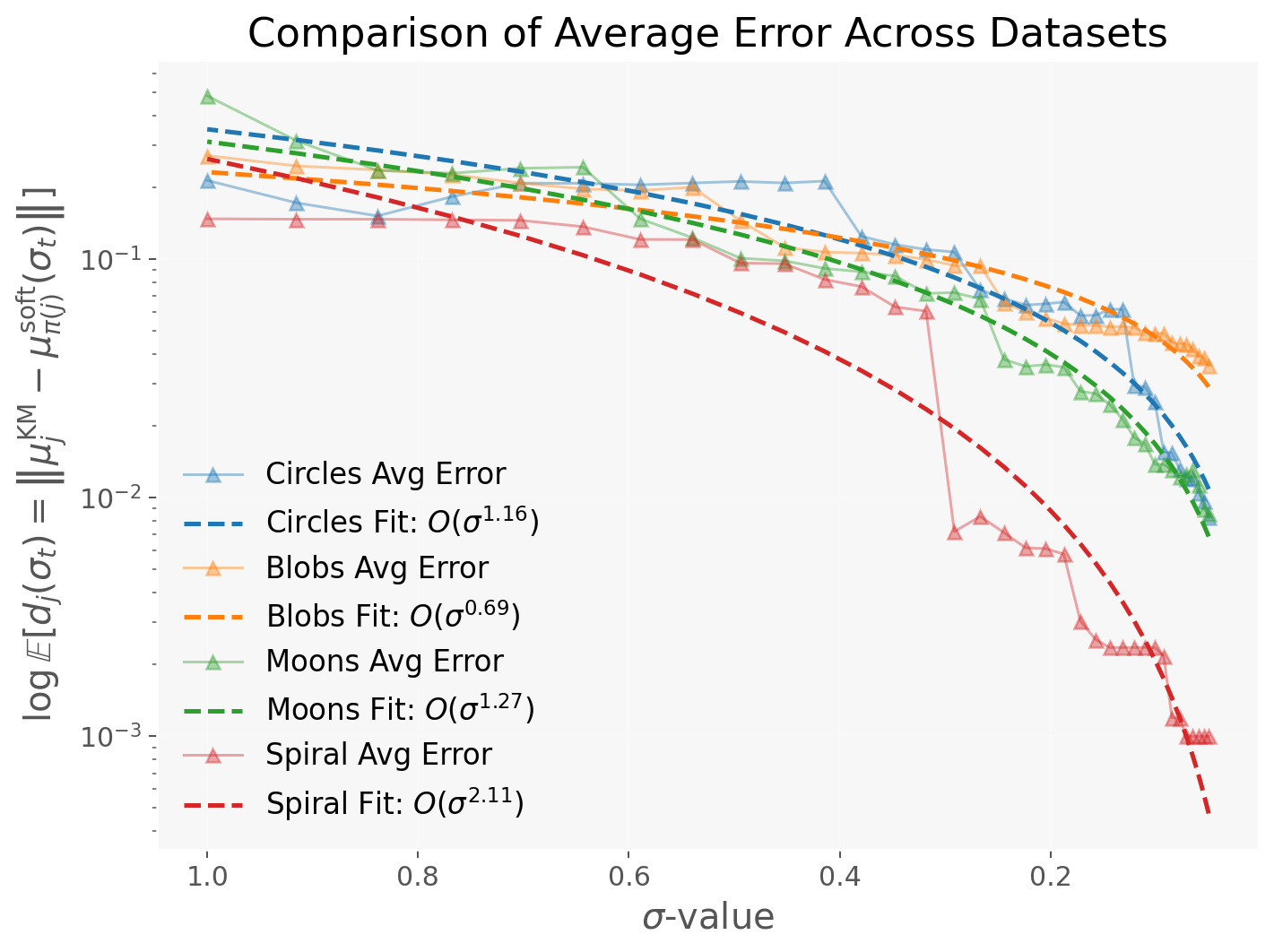}
        \caption{Sigma-wise}
        \label{fig:logconvergence}
    \end{subfigure}
    \hfill
    \begin{subfigure}{0.45\textwidth}
        \centering
        \includegraphics[width=\linewidth]{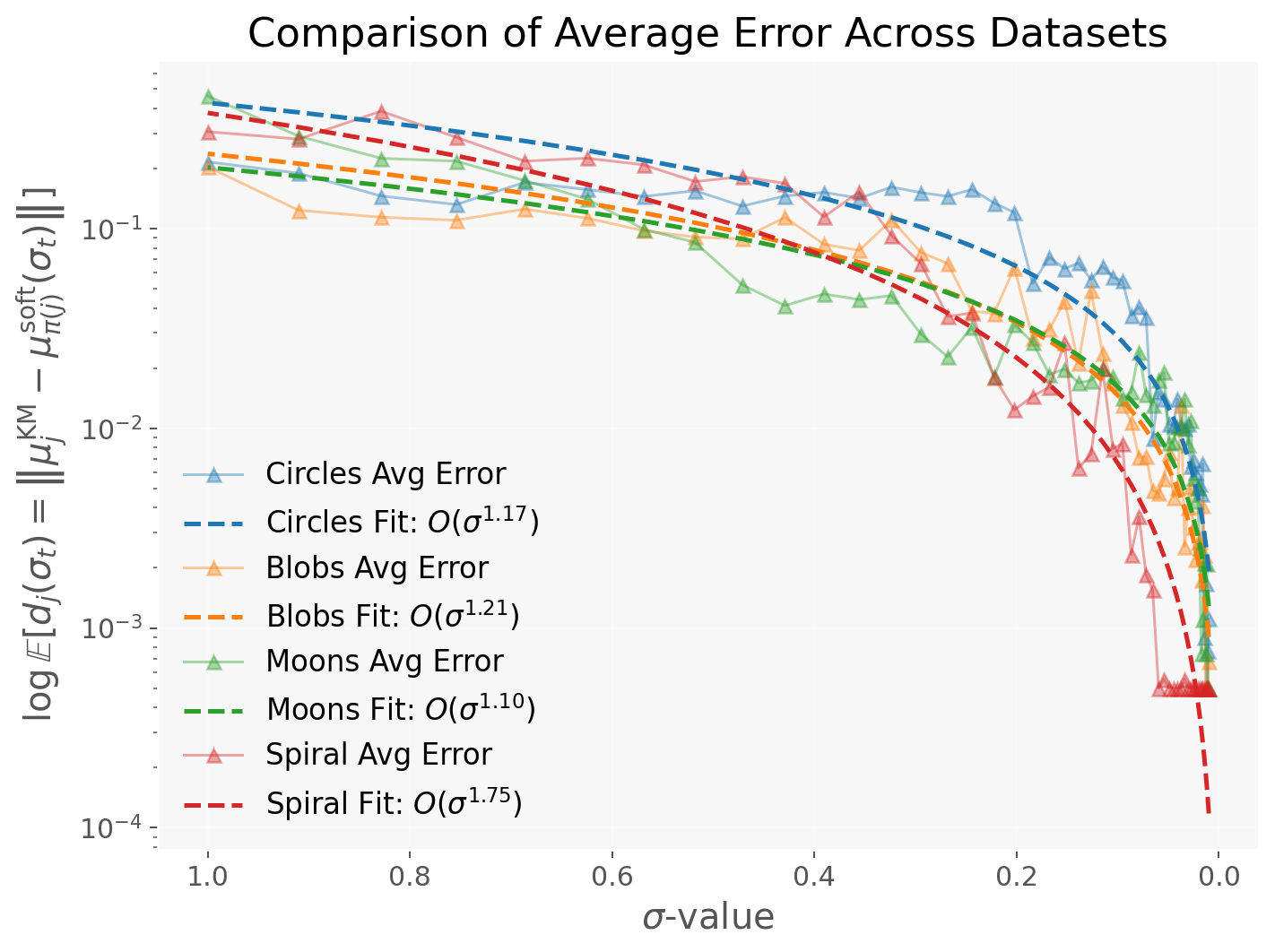}
        \caption{Initialization-wise}
        \label{fig:wiseconvergence}
    \end{subfigure}
    \caption{Centroid discrepancies $d_j(\sigma)$ under two temperature--decay protocols.
(\textit{Sigma-wise}) A fixed centroid initialization is evaluated across a decreasing $\sigma$ schedule, measuring the deviation from the hard K-Means solution at each temperature.
(\textit{Initialization-wise}) Centroids are re-initialized independently over $M$ runs, and the discrepancy is averaged across initializations for each $\sigma$. In both settings, all datasets exhibit a monotone collapse of soft-RBF centroids toward the K-Means centroids as $\sigma \to 0$.}
    \label{fig:comparacion}
\end{figure}

To quantify the rate of convergence, we fit a linear model on the log-log relation between error and temperature defined in the Section~\ref{sec:rateconv},
\[
\log R(\sigma) = m \log \sigma + b.
\]
The slope $m$ estimates the exponent in the asymptotic law $R(\sigma) = O(\sigma^m)$. With standard RBF responsibilities, the estimated rate matches the decay previously proven. Under the Entmax-1.5 transformation, however, the empirical exponent consistently stabilizes around $m \approx 1$ indicating a linear-type decay, consistent with the almost piecewise-linear nature of Entmax–1.5 assignments at low temperature and empirically verify that $m>1$ across all test cases.

Figure~\ref{fig:wiseconvergence} summarizes the estimated exponents across datasets. All fitted values satisfy $m>1$, confirming a faster-than-linear decay and remaining strictly below the theoretical upper bound of Theorem~\ref{teo:entroidbound}. The only deviation appears in the Gaussian blobs configuration shown in Fig.~\ref{fig:logconvergence}. This anomaly arises from the dataset-dependent constants $R$ and $\gamma_{\min}$, which cap the maximum attainable discrepancy: since centroid initialization is fixed, these constants remain unchanged throughout the schedule and impose an upper limit on $R(\sigma)$ for large $\sigma$. This saturates the curve and distorts the slope estimation. We keep this initialization to avoid cherry-picking and to illustrate the behaviour under controlled centroid positions, even if it induces this dataset-specific saturation effect.

\subsection{Qualitative Behaviour}

The qualitative behaviour of the smoothed model aligns tightly with the theoretical predictions. As $\sigma$ decreases, the centroid trajectories collapse onto the classical K-Means fixed points, producing visually indistinguishable partitions once the temperature enters the low-$\sigma$ regime. In well-separated datasets such as Gaussian blobs, the collapse is immediate and stable, with smooth paths that converge almost radially toward the hard solution, reflecting the isotropic geometry of the data.

\begin{figure}[ht]
    \centering
    \begin{minipage}{0.49\textwidth}
        \centering
        \includegraphics[width=\textwidth]{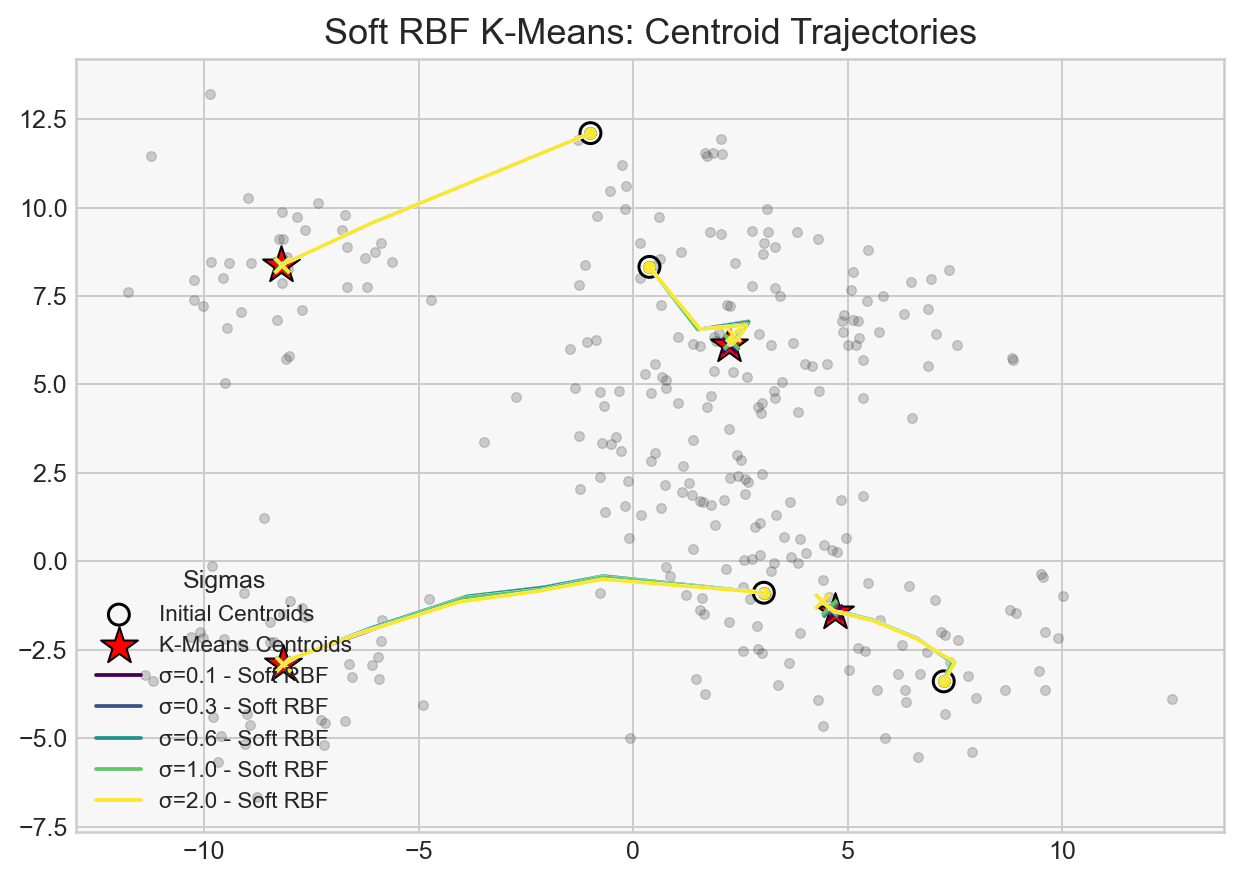}
    \end{minipage}
    \begin{minipage}{0.49\textwidth}
        \centering
        \includegraphics[width=\textwidth]{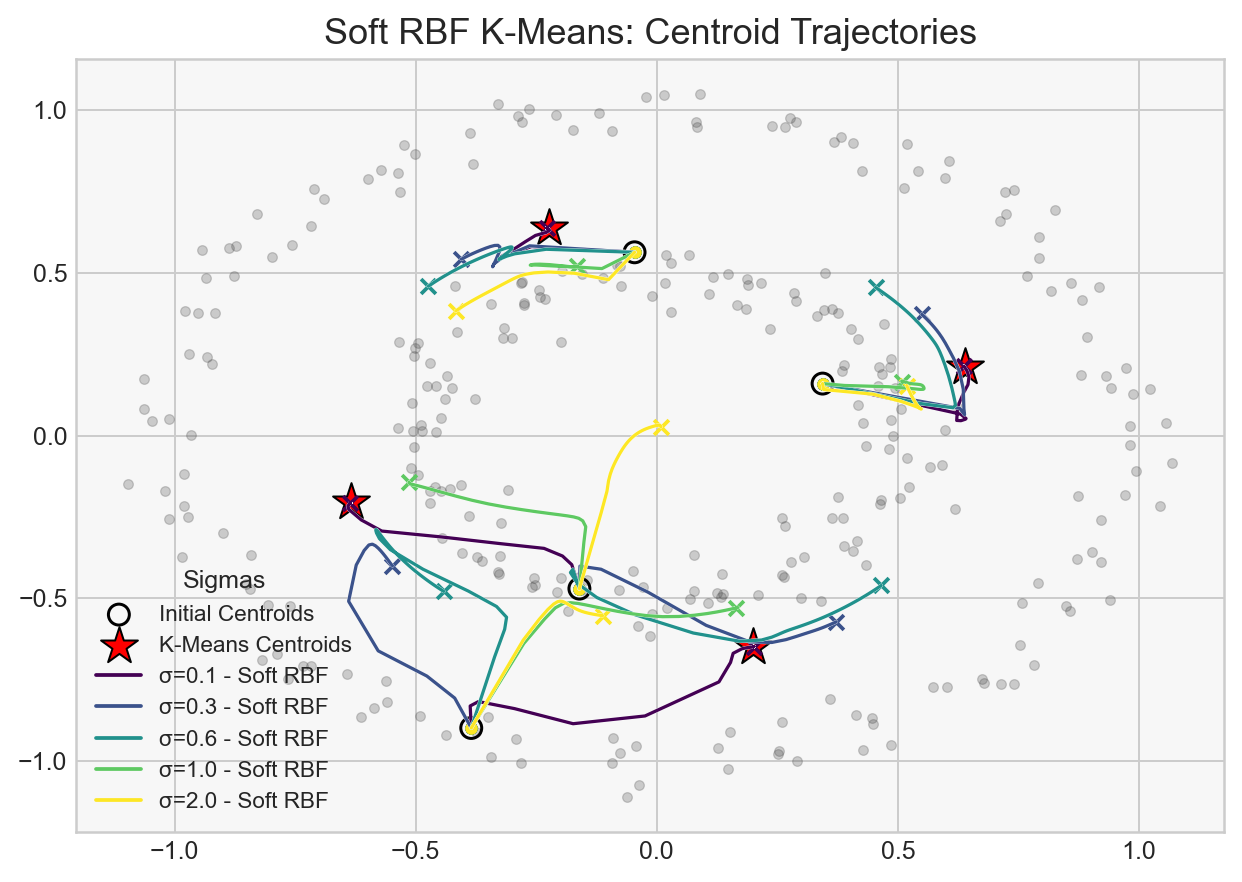}
    \end{minipage}
    \begin{minipage}{0.49\textwidth}
        \centering
        \includegraphics[width=\textwidth]{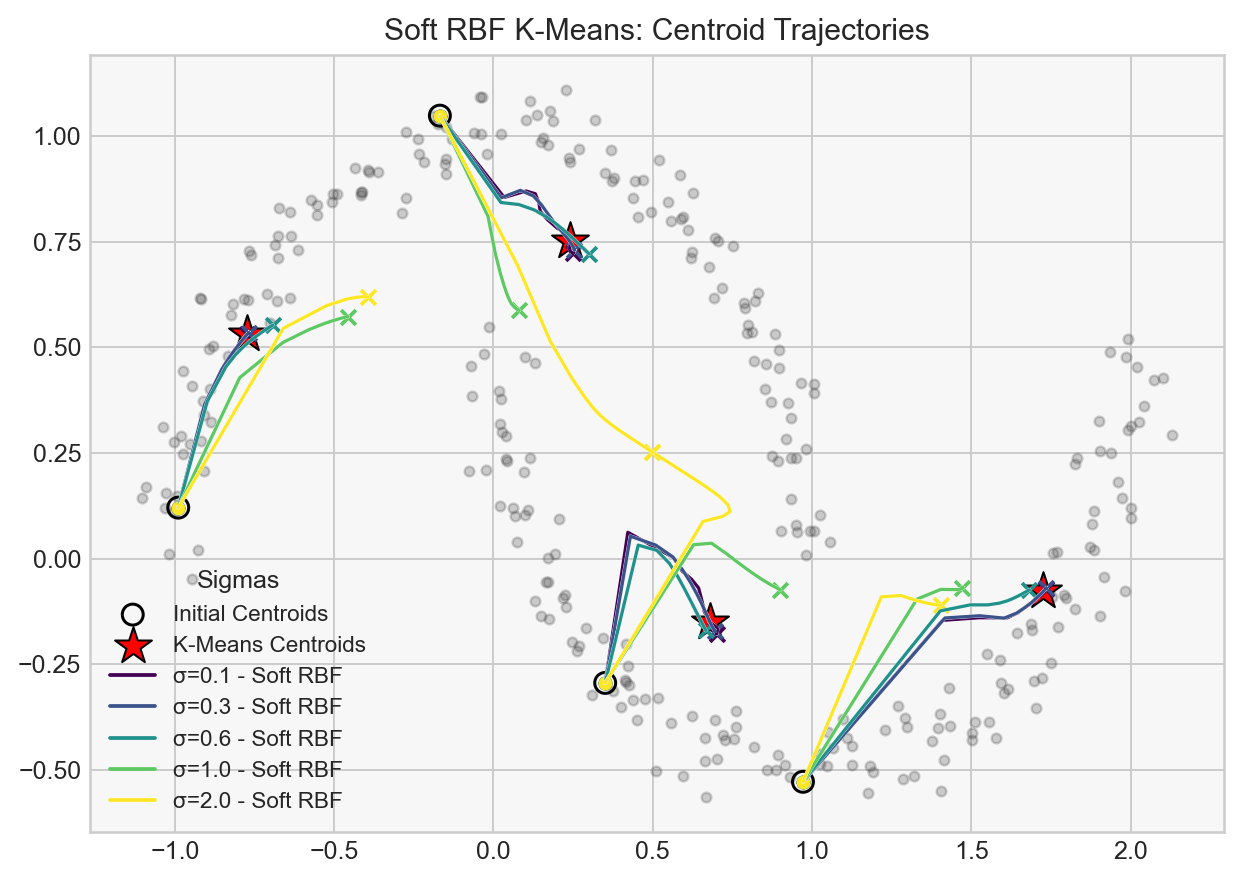}
    \end{minipage}
    \begin{minipage}{0.49\textwidth}
        \centering
        \includegraphics[width=\textwidth]{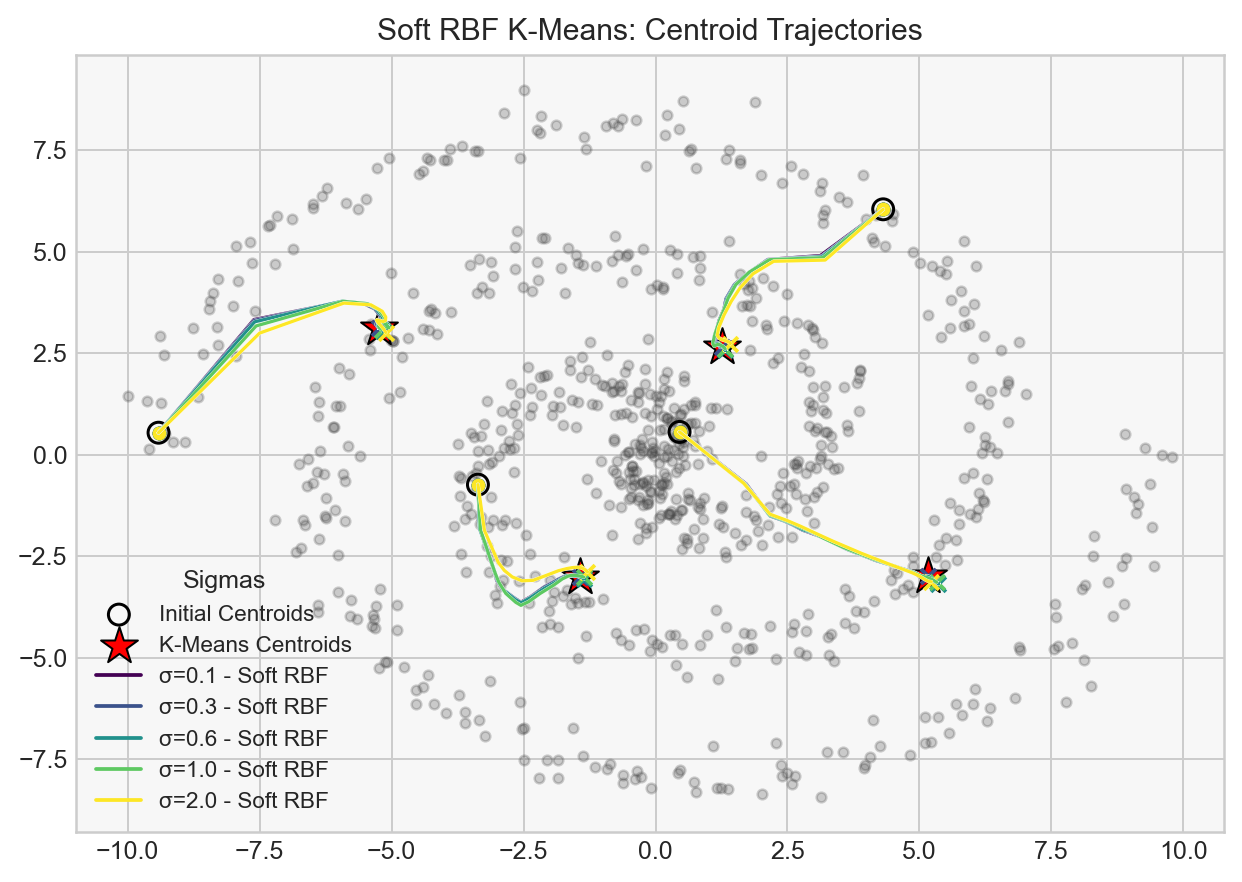}
    \end{minipage}
    \caption{Convergence paths of the smoothed centroids as $\sigma$ decreases. Each curve shows the trajectory from initialization to its final position. As $\sigma \to 0$, all paths collapse onto the starred point, which corresponds to the true K-Means solution, demonstrating that the smoothed formulation converges to the hard K-Means optimum.}

\label{fig:centroids_paths}
\end{figure}

For nonlinear geometries such as moons, spirals, and concentric circles, the trajectories display more irregular and winding paths. While isotropic datasets exhibit near-radial convergence, non-convex manifolds induce sharper directional changes along the optimization path. This behaviour highlights the soft-to-hard transition of the attraction basins: at higher temperatures, a centroid is influenced by a diffuse, global average of the dataset, whereas as $\sigma \to 0$, the responsibility mass concentrates rapidly, forcing the centroid to re-align with its emerging Voronoi cell. In complex geometries, this realignment often produces abrupt turns as the trajectory adapts to the underlying manifold structure.

The centroid-path visualizations in Figure~\ref{fig:centroids_paths} illustrate this phenomenon: each trajectory contracts toward a single starred point corresponding to the K-Means optimum, confirming that the qualitative dynamics of the smoothed system reproduce the hard clustering geometry in the limit $\sigma \to 0$.


\section{Extensions and Discussion}

\label{sec:extension}

Besides the theoretical contribution, the proposed reparametrization yields a fully differentiable and continuous representation of the K-Means objective that can be used in deep learning problems. To apply everything, we can model a loss function. Let $r_{ij}(\theta)$ denote the differentiable assignment mechanism (e.g., the RBF–Entmax–1.5 construction of Section~\ref{sec:entmax} \citep{peters2019sparse}) and let $\mu=(\mu_1,\dots,\mu_k)$ be the trainable centroids. The centroid loss
\[
\mathcal{L}_{\text{kmeans}}(\theta,\mu)
= \sum_{j=1}^k \sum_{i=1}^n r_{ij}(\theta)\,\|x_i-\mu_j\|^2
\]
is smooth in all parameters, which enables direct incorporation into any neural architecture. In particular, models such as \citep{zhu2011research, faraoun2006neural, caron2018deep}, which traditionally embed K-Means heuristically or as a postprocessing stage, can instead integrate the clustering dynamics by optimizing $(\phi,\mu)$ jointly inside the main problem modeling.

This makes it possible to embed K-Means inside arbitrary architectures by augmenting the objective with the addition of a regularization term
\[
\min_{\phi,\mu}
\;\mathcal{L}_{\text{task}}(\phi)
+\lambda\,\mathcal{L}_{\text{kmeans}}(\phi,\mu).
\]
The gradient
\[
\nabla_\mu \mathcal{L}_{\text{kmeans}}
= 2\sum_{i=1}^n r_{ij}(\theta)\,(\mu_j - z_i)
\]
updates the centroids continuously at each iteration, while $\nabla_\phi \mathcal{L}_{\text{kmeans}}$ imposes smooth cluster separation directly in the latent space. This removes the need for alternating minimization or external recomputation of K-Means and synchronizes centroid dynamics with the evolution of the representation learned by the network.

Some remark concerns geometric expressiveness. The differentiable formulation does not extend the representational capacity of K-Means: the method remains confined to Euclidean Voronoi partitions and is inadequate for nonlinear or manifold-structured data \citep{hastie2005elements,dhillon2004kernel}. Its contribution is computational rather than geometric. The reparametrization provides a stable, differentiable surrogate that enables embedding K-Means inside neural networks, but applications requiring nonlinear cluster geometry must still rely on richer clustering objectives or manifold-aware distances.

\subsection{High Dependency on the Shape of the Data}

The results in Figure~\ref{fig:comparacion} show that the convergence rates differ markedly across datasets, with large deviations in the values of $m$. In isotropic datasets such as Blobs and Circles, convergence is consistently stable, as their geometry naturally favors the soft-to-hard transition of the algorithm. In contrast, datasets with more complex manifolds, such as Moons and Spirals, exhibit significantly faster yet more irregular convergence patterns.

This disparity is a direct consequence of the geometry-dependent loss landscape. In Gaussian Blobs, the gradient flow remains stable because clusters are separated by clear Euclidean margins, producing well-defined attraction basins. However, in manifold-structured data such as Moons, these attraction regions become highly sensitive to the temperature parameter. As $\sigma$ decreases and responsibilities sharpen, the gradient dynamics shift from global data-averaging toward localized centroid estimation. The observed irregularity therefore reflects a deformation of local minima: small variations in $\sigma$ can substantially alter the effective ownership of points between neighboring clusters, increasing path-dependency in non-convex regimes despite the theoretical equivalence of the formulations.

When plotting the centroid trajectories as in Figure~\ref{fig:centroids_paths}, some datasets still display relatively coherent paths: even under large values of $\sigma$, certain initializations lead to similar convergence behavior. However, other initial points reveal strong divergence as $\sigma$ varies, while a subset of trajectories shows a smoother, almost differential evolution, where the centroids slide progressively as $\sigma$ decreases.

\subsection{Deep RBF Models Exploration}

A natural question is how these behaviors change once deep NN are introduced. In practice, this setting is equivalent to running K-Means on the latent space of a pre-trained network, as commonly done in several applications \citep{gao2020deep, caron2018deep}. The central issue is how to shape the network’s training so that its latent manifold becomes genuinely useful for clustering.

As discussed at the beginning of Section~\ref{sec:extension}, combining multiple loss terms can force the network to mold its internal manifold \citep{belghazi2018mutual}. However, such constraints often push the latent space toward almost isotropic structures, thereby suppressing richer geometric features such as curvature, convexity, and general nonlinear patterns that can be recovered under the addition of a nonlinear loss function \citep{de2025deep}. K-Means limits the capacity of the model to capture complex variations in the data.

For this reason, and consistent with the evidence reported in \citep{gao2023rethinking}, we discourage the use of K-Means when the underlying task involves highly nonlinear manifolds or intricate convex structures. Forcing the latent space toward isotropy ultimately erodes the very geometric properties that deep models are supposed to exploit.

Despite these geometric constraints, the practical adoption of the SoftRBF framework within deep learning pipelines depends not only on its representational limits but also on its operational feasibility. While the previous discussion cautions against its use in highly non-linear manifolds, the transition from a discrete algorithm to a differentiable one must remain computationally viable. This leads to a critical examination of whether the benefits of gradient-based optimization and end-to-end integration come at a significant cost in terms of efficiency.

\subsection{Computational Efficiency and Practical Trade-offs}

The differentiable formulation and the classical K-Means share the same structural bottleneck: evaluating all pairwise distances.  
The main difference lies in how assignments are produced.  
Classical K-Means uses a hard $\arg\min$ step, while the SoftRBF-Net replaces it with an Entmax–1.5 projection, introducing an additional sorting cost.  
Thus, the choice between the two methods depends on whether the benefits of differentiability and sparsity compensate for this small overhead.

Let $D \in \mathbb{R}^{n \times d}$ be the dataset, with $k$ clusters and $T$ outer iterations.

\textbf{K-Means} Each iteration computes all distances $D_{ij}=\|x_i-\mu_j\|^2$, assigns each point via $S_i=\arg\min_j D_{ij}$, and recomputes the centroids.  
The distance evaluation dominates the computation:
\[
C_{\mathrm{KMeans}}=\Theta(nkd),
\qquad
\mathrm{Time}(\text{K-Means})=\Theta(Tnkd).
\]
Memory usage is $O(nd + nk)$.

\textbf{SoftRBF-Net} SoftRBF-Net performs the same distance evaluations, followed by the Entmax–1.5 projection, which adds an $O(k\log k)$ sorting step per sample.  
Accumulating sufficient statistics remains $O(nkd)$:
\[
C_{\mathrm{SoftRBF}}
    =\Theta\!\bigl(nk(d+\log k)\bigr),
\qquad
\mathrm{Time}(\text{SoftRBF})
    =\Theta\!\bigl(T\, nk(d+\log k)\bigr).
\]

When Entmax–1.5 produces sparse assignments with expected sparsity $s\ll k$, the accumulation step reduces to $O(ns d)$, often yielding lower empirical cost than classical K-Means.

The differentiable formulation introduces only a modest $O(k\log k)$ computational overhead relative to classical K-Means. This cost becomes negligible once sparsity emerges in the assignment vectors, making SoftRBF networks attractive in settings where gradient-based optimization, end-to-end training, or structured sparsity provide practical advantages.

In full deep learning pipelines, however, each update entails redundant gradient computations that are unnecessary in the purely combinatorial K-Means algorithm. Moreover, the use of Entmax-1.5 is not mandatory: alternative stable activation functions can be employed to induce sparsity while preserving the same limiting behaviour and theoretical equivalence.






\section{Conclusions}

This work provides a rigorous correspondence between classical K-Means and a differentiable RBF formulation based on smooth responsibilities. The loss $\mathcal{L}_\sigma$ converges to the standard distortion as $\sigma\to 0$, and the centroid dynamics arise as continuous relaxations of the discrete update. This yields a stable gradient-based mechanism in which centroids and latent representations can be optimized jointly inside a neural architecture.

The proposed framework enables end-to-end integration of clustering by simply augmenting the task loss with a differentiable centroid term, avoiding alternating procedures and eliminating the need to recompute K-Means externally. The use of Entmax–1.5 additionally stabilizes the low-temperature regime, ensuring convergence to the classical solution while maintaining smooth gradients throughout training.

Beyond the technical result, this formulation encourages a shift toward continuous versions of classical methods. Relying on strictly discrete pipelines often forces artificial separations between components, limiting the development of homogeneous models that operate coherently under a single optimization principle. By recasting classical algorithms in differentiable form, one can design architectures where methodological blocks interact naturally, share gradients, and evolve jointly, paving the way for integrated systems that are more stable, expressive, and theoretically unified.



\section{Acknowledgements}
We thank Gustavo de los Ríos Alatorre for computational support, and Luis Alberto Muñoz for the in–office discussions that significantly contributed to this work. We also owe Luis a seafood meal for the key ideas he provided. A personal thank-you to Nezih Nieto for encouraging me (the main author) to write more papers and Sofia Alvarez for proofreading.

\setcitestyle{authoryear,round}
\bibliographystyle{plainnat}
\bibliography{references}

\newpage

\appendix
\section{Optional Proofs}

\subsection{Proof of Corollary 1 (Coercivity of the Theorem 1)}
\label{app:coroCoercive}

\begin{corollary*}
Assume that $J$ is coercive on $\mathbb R^{k\times d}$. 
Then the conclusion of Theorem~\ref{thm:gamma-refined} 
remains valid with $\mathcal K=\mathbb R^{k\times d}$.
\end{corollary*}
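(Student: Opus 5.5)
The plan is to reduce the unconstrained statement to the compact case of Theorem~\ref{thm:gamma-refined}. The tool is a uniform lower bound $\mathcal L_\sigma\ge J$, which turns the coercivity of $J$ into equi-coercivity of the family $\{\mathcal L_{\sigma_m}\}_m$. First I would note that neither half of the $\Gamma$-convergence argument used compactness of $\mathcal K$. The $\liminf$ inequality rests only on the convex-combination bound
\[
\sum_{j=1}^k r_{ij}(\sigma;\mu)\|x_i-\mu_j\|^2\ \ge\ \min_{1\le j\le k}\|x_i-\mu_j\|^2
\]
together with continuity of $J$. The $\limsup$ inequality uses the constant recovery sequence and pointwise concentration of the Softmax weights on $A_i(\mu)$. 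Both arguments are local in $\mu$, so $\mathcal L_{\sigma_m}\xrightarrow{\Gamma}J$ holds verbatim on $\mathbb R^{k\times d}$.

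The second step is equi-coercivity. Summing the displayed inequality over $i$ gives $\mathcal L_\sigma(\mu)\ge J(\mu)$ for every $\sigma>0$ and every $\mu$. Hence $\{\mathcal L_\sigma\le t\}\subset\{J\le t\}$ for each $t$, and the right-hand set is bounded by coercivity of $J$. It is also closed by continuity, so it is compact.

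Next I would bound the minimum values uniformly. Fix any $\mu^0$. Since the weights form a convex combination,
\[
\mathcal L_\sigma(\mu^0)\le c:=\sum_i\max_j\|x_i-\mu^0_j\|^2,
\]
a bound independent of $\sigma$. Two consequences follow. First, minimizers of $\mathcal L_{\sigma_m}$ exist: $\mathcal L_{\sigma_m}$ is continuous, and one may restrict it to the compact set $\{J\le c\}$. Second, every $\mu^*(\sigma_m)$ lies in $\{J\le c\}$, so the sequence of minimizers has accumulation points.

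Finally I would run the fundamental-theorem argument directly. Suppose $\mu^*(\sigma_{m'})\to\bar\mu$ along a subsequence, and let $\mu$ be arbitrary. The $\liminf$ inequality, minimality of $\mu^*(\sigma_{m'})$, and the recovery sequence give
\[
J(\bar\mu)\le\liminf_{m'}\mathcal L_{\sigma_{m'}}(\mu^*(\sigma_{m'}))\le\limsup_{m'}\mathcal L_{\sigma_{m'}}(\mu)=J(\mu).
\]
So $\bar\mu$ minimizes $J$ on $\mathbb R^{k\times d}$. Taking $\mu=\bar\mu$ also yields convergence of the minimum values.

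I expect the main obstacle to be the hypothesis itself rather than any step of the argument. For $k\ge2$, $J$ is in general \emph{not} coercive: sending one centroid to infinity while the others stay fixed keeps $J$ bounded. The corollary is then valid but its hypothesis is restrictive. I would flag this in the write-up, and note that the same proof works if "coercive" is replaced by the assumption that minimizers of $\mathcal L_{\sigma_m}$ stay in a bounded set. That boundedness can be secured by restricting to $\mu$ in the convex hull of the data, which can be done without loss for the minimal value.
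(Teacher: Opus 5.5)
Your proof is correct and follows the same route as the paper's appendix argument. Coercivity confines the minimizers of $\mathcal L_{\sigma_m}$ to a fixed bounded sublevel set, after which the $\Gamma$-convergence argument of Theorem~\ref{thm:gamma-refined} applies, and your inequality $\mathcal L_\sigma\ge J$ (together with the uniform bound $\mathcal L_\sigma(\mu^0)\le c$) is exactly what justifies that confinement; the paper attributes it only to pointwise convergence and $\mathcal L_{\sigma_m}\ge 0$, which alone would not stop minimizers from escaping to infinity.

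Two remarks on your closing comments. Your caveat about the hypothesis can be sharpened: for $k\ge 2$ the functional $J$ is never coercive, so the corollary is vacuous there. Be careful with the convex-hull fix, though. Projecting centroids onto $\mathrm{conv}(X)$ lowers $J$ but not necessarily $\mathcal L_\sigma$, which is not monotone in the distances and whose unconstrained minimizers can lie outside the hull. That device therefore only gives the compact case of Theorem~\ref{thm:gamma-refined} on $\mathrm{conv}(X)^k$, not boundedness of unconstrained minimizers.
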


\begin{proof}
Coercivity of $J$ implies that its sublevel sets are bounded. 
Since $\mathcal L_{\sigma_m}\to J$ pointwise and 
$\mathcal L_{\sigma_m}\ge 0$, 
minimizers of $\mathcal L_{\sigma_m}$ remain in a common bounded sublevel set. 
Hence minimizing sequences are precompact, and the argument of 
Theorem~\ref{thm:gamma-refined} applies on this bounded set. 
The conclusion follows.
\end{proof}

\subsection{Proof of Theorem 5 (Hard-Assignment Limit of Entmax–1.5/Softmax)}
\label{app:EntmaxHardLimit}

Let \(d_{ij}=\|x_i-\mu_j\|^2\) and, for \(\tau>0\), define the negative-logit vectors
\[
z_i(\tau) = -\frac{1}{\tau}\,(d_{i1},\dots,d_{ik})\in\mathbb R^k.
\]
For \(\alpha\ge 1\), the Entmax-\(\alpha\) transformation is the solution of the variational problem
\[
\operatorname{Entmax}_\alpha(z)
    =
    \arg\max_{p\in\Delta^{k-1}}
        \Big( p^\top z + H_\alpha(p) \Big),
\]
where \(H_\alpha(p)\) denotes the Tsallis entropy of order \(\alpha\). For \(\alpha=1\), this recovers the usual Softmax.

\begin{theorem}
For any \(\alpha\in[1,2]\) and all \(i\),
\[
r_{ij}^{(\alpha,\tau)} :=
    \big(\operatorname{Entmax}_\alpha(z_i(\tau))\big)_j
    \;\longrightarrow\;
    r_{ij}(0)
    =
    \mathbf{1}\!\left\{j=\arg\min_\ell d_{i\ell}\right\}
\quad\text{pointwise as }\tau\to 0^+.
\]
If the minimizer of \(d_{i\ell}\) is unique, the limit is the corresponding one-hot vector.
\end{theorem}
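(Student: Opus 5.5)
We prove the claim by a scaling argument in the variational characterization of \(\operatorname{Entmax}_\alpha\), using only that \(H_\alpha\) is continuous and bounded on \(\Delta^{k-1}\). We treat the unique-minimizer case, which is the statement's one-hot conclusion. With ties, the limit is not the displayed indicator; the argument below shows that the mass concentrates on the argmin set \(A_i\), and for \(\alpha=1\) it splits evenly over \(A_i\).

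\textbf{Step 1 (rescaled problem).} Fix \(i\) and write \(d_i=(d_{i1},\dots,d_{ik})\). Multiplying the objective by \(\tau>0\) does not change the maximizer. Hence
\[
r_i^{(\alpha,\tau)}=\arg\min_{p\in\Delta^{k-1}}\Big(p^\top d_i-\tau H_\alpha(p)\Big).
\]
Set \(F_\tau(p)=p^\top d_i-\tau H_\alpha(p)\) and \(F_0(p)=p^\top d_i\). For \(\alpha\in[1,2]\), the Tsallis entropy \(H_\alpha\) (Shannon when \(\alpha=1\)) is continuous and concave on the compact simplex, with \(0\le H_\alpha\le M_\alpha\) for a finite constant \(M_\alpha\). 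It is strictly concave, so the maximizer is unique. Consequently
\[
\sup_{p\in\Delta^{k-1}}|F_\tau(p)-F_0(p)|\le \tau M_\alpha\to0 .
\]

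\textbf{Step 2 (quantitative concentration).} Let \(d_{(1)}=\min_\ell d_{i\ell}\). Let \(A_i\) be the argmin set, and let \(g_i>0\) be the gap to the next smallest value; if all \(d_{i\ell}\) are equal, the claim is trivial. Compare \(r=r_i^{(\alpha,\tau)}\) with any vertex \(e_{j^*}\), \(j^*\in A_i\). Optimality gives \(F_\tau(r)\le F_\tau(e_{j^*})=d_{(1)}\), since \(H_\alpha\) vanishes at vertices. This yields
\[
r^\top d_i-d_{(1)}\le \tau H_\alpha(r)\le\tau M_\alpha .
\]
The left side is at least \(g_i\sum_{\ell\notin A_i} r_\ell\). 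Therefore
\[
\sum_{\ell\notin A_i} r_{i\ell}^{(\alpha,\tau)}\le \frac{\tau M_\alpha}{g_i}\to0 .
\]
Because \(r\) lies in the simplex, \(\sum_{\ell\in A_i}r_\ell\to1\). If the minimizer \(j^*\) is unique, then \(r_{ij^*}\to1\) and every other coordinate tends to \(0\), which is exactly the claimed one-hot limit. The bound also gives a rate \(O(\tau)\), uniform over \(\alpha\in[1,2]\) once \(M_\alpha\) is bounded uniformly; in fact \(M_\alpha\le\log k\). This is consistent with Theorem~\ref{teo:entroidbound}. For softmax the rate sharpens to exponential, as in Theorem~\ref{teo:exponentialconv}.

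\textbf{Step 3 (ties).} When \(|A_i|>1\), Step 2 still forces the mass onto \(A_i\). The entries of \(z_i(\tau)\) indexed by \(A_i\) are equal, and \(\operatorname{Entmax}_\alpha\) is permutation-equivariant. Combined with uniqueness of the maximizer, this gives equal weights on \(A_i\). The limit is then \(|A_i|^{-1}\mathbf 1_{A_i}\), not the literal indicator of a single index.

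The main obstacle is making the convention for \(H_\alpha\) precise. One must check that \(H_\alpha\ge0\), that it vanishes on vertices, and that it is bounded by some \(M_\alpha\le\log k\) for every \(\alpha\in[1,2]\). Under the normalization of \citet{peters2019sparse}, \(H_\alpha(p)=\frac{1-\sum_j p_j^\alpha}{\alpha(\alpha-1)}\), these facts are elementary. With them in hand, Step 2 is a two-line comparison with a vertex.
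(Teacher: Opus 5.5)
Your proof is correct and rests on the same idea as the paper's: after rescaling, the linear term $p^\top d_i$ dominates because $H_\alpha$ is bounded on the simplex. The paper argues this loosely, through ``dominance'' and ``pointwise continuity of the convex problem''. It even asserts that the output lies exactly in the linear argmax for small $\tau$, which fails for softmax ($\alpha=1$). Your comparison with a vertex instead gives the explicit bound $\sum_{\ell\notin A_i} r_{i\ell}^{(\alpha,\tau)}\le \tau M_\alpha/g_i$ with $M_\alpha\le\log k$. Your tie analysis also correctly identifies the limit $|A_i|^{-1}\mathbf 1_{A_i}$, which the literal indicator in the statement does not capture.
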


\begin{proof}
As \(\tau\to 0^+\), the coordinates of \(z_i(\tau)\) separate in magnitude:
\[
z_{i\ell^*}(\tau)\gg z_{ij}(\tau)
\quad\text{for}\quad
\ell^*=\arg\min_\ell d_{i\ell},\; j\neq \ell^*.
\]
In the variational characterization of entmax-$\alpha$, the linear term \(p^\top z_i(\tau)\) scales as \(1/\tau\), while \(H_\alpha(p)\) stays bounded independently of \(\tau\). Thus, for sufficiently small \(\tau\), the linear term dominates and the maximizer must concentrate its mass on the coordinates that maximize \(z_i(\tau)\), i.e., those minimizing \(d_{ij}\).

Formally,
\[
\frac{1}{\tau}\, p^\top(-d_i) \gg H_\alpha(p)
\quad\Longrightarrow\quad
\operatorname{Entmax}_\alpha(z_i(\tau))
    \in \arg\max_{p\in\Delta^{k-1}} p^\top z_i(\tau).
\]
The maximizers of a linear functional over the simplex are precisely the extreme points supported on the coordinates where the maximum is achieved. If the maximum is unique, the solution is the one-hot vector at \(\ell^*\); if ties exist, Entmax-$\alpha$ spreads mass only among tied coordinates.

By pointwise continuity of the convex problem and the dominance of the linear term, we obtain
\[
r_{ij}^{(\alpha,\tau)} \xrightarrow[\tau\to 0^+]{} r_{ij}(0),
\]
with the additional sparsity property characteristic of \(\alpha>1\), such as Entmax-1.5.
\end{proof}

\newpage
\section{Algorithms for the clustering}
\subsection{K-Means}
\begin{algorithm}[ht]
\caption{\textbf{K-Means} Algorithmic procedure for standard K-Means with $k$ clusters and $T$ iterations. At each step, points are reassigned to the nearest centroid and centroids are recomputed as arithmetic means of their assigned sets.}
\begin{algorithmic}[1]
\Require Dataset $X\in\mathbb{R}^{n\times d}$, clusters $k$, iterations $T$
\State Initialize $\mu \leftarrow \mathrm{InitCenters}(X,k)$
\For{$t = 1,\ldots,T$}
    \State Compute squared distances $D_{ij}=\|x_i-\mu_j\|^2$
    \State Assign clusters $S_i = \arg\min_{j} D_{ij}$
    \For{$j = 1,\ldots,k$}
        \State $X_j \leftarrow \{x_i : S_i = j\}$
        \If{$|X_j| > 0$}
            \State $\mu_j \leftarrow \frac{1}{|X_j|}\sum_{x\in X_j} x$
        \EndIf
    \EndFor
\EndFor
\State \Return $(\mu, S)$
\end{algorithmic}
\label{alg:kmeans}
\end{algorithm}
\subsection{SoftRBF}
\begin{algorithm}[ht]
\caption{\textbf{SoftRBF-Net:} Soft K-Means optimization using RBF-based similarity and Entmax–1.5 assignments. Centroid updates are performed via a closed-form gradient descent step with an adaptive learning rate $\eta_j = 1/(2r_j)$ for each cluster, where $r_j = \sum_i R_{ij}$.}
\label{alg:rbf-gradient}
\begin{algorithmic}[2]
\Require Dataset $X \in \mathbb{R}^{n \times d}$, clusters $k$, scale $\sigma$, iterations $T$
\State Initialize $\mu \leftarrow \mathrm{InitCenters}(X, k)$
\For{$t = 1, \dots, T$}
    \State Compute squared distances $D_{ij} = \|x_i - \mu_j\|^2$
    \State Compute logits $Z_{ij} = -D_{ij} / (2\sigma^2)$ \Comment{Ensuring consistency with Eq. 7}
    \State Compute soft assignments $R = \mathrm{Entmax}_{1.5}(Z)$
    \State $r_j = \sum_{i=1}^n R_{ij}$ \Comment{Cluster mass / Responsibility sum}
    \State $w_j = \sum_{i=1}^n R_{ij} x_i$ \Comment{Weighted sum of points}
    \For{$j = 1, \dots, k$}
        \State \textbf{Gradient:} $\nabla_j = 2(r_j \mu_j - w_j)$ \label{step:gradient}
        \State \textbf{Step size:} $\eta_j = 1 / (2r_j)$ \label{step:stepsize}
        \State \textbf{Update:} $\mu_j \leftarrow \mu_j - \eta_j \nabla_j$
    \EndFor
\EndFor
\State \Return $(\mu, R)$
\end{algorithmic}
\end{algorithm}

\newpage

\section{Algorithms for evaluation}

\subsection{With fixed initialization}
\begin{algorithm}[H]
\caption{Evaluation of the convergence metric $R_{\mathrm{fixed}}(\sigma)$, where the same set of $M$ initial centroid configurations is used for every value of $\sigma$. This experiment isolates the effect of the RBF temperature parameter by holding initialization variability fixed across all runs.}
\begin{algorithmic}[1]

\Require Data matrix $X$, number of clusters $k$, iterations $T$, number of initializations $M$
\Require Sigma range $\{\sigma_1, \dots, \sigma_L\}$
\State Generate $\{\mu^{(0)}_1, \dots, \mu^{(0)}_M\}$ using $M$ random initializations

\State Run K-Means for each $\mu^{(0)}_i$:
\For{$i = 1 \dots M$}
    \State $\mu^{\text{KM}}_i \gets \text{KMeans}(X, \mu^{(0)}_i)$
\EndFor

\For{$\ell = 1 \dots L$}
    \State $\sigma \gets \sigma_\ell$
    \State Initialize list $\mathcal{D} = []$

    \For{$i = 1 \dots M$}
        \State $\mu^{\text{soft}}_{i,\sigma} \gets \text{SoftKMeansRBFGrad}(X, \mu^{(0)}_i, \sigma)$
        \State Compute optimal permutation $\pi$ matching centroids
        \State $d_i \gets \sum_{j=1}^k \left\| \mu^{\text{KM}}_{i,j} - \mu^{\text{soft}}_{i,\sigma,\,\pi(j)} \right\|$
        \State Append $d_i$ to $\mathcal{D}$
    \EndFor

    \State $R_{\text{fixed}}(\sigma_\ell) \gets \text{mean}(\mathcal{D})$
\EndFor

\State \Return $\{R_{\text{fixed}}(\sigma_1), \dots, R_{\text{fixed}}(\sigma_L)\}$

\end{algorithmic}
\label{alg:metric-reconstruction}
\end{algorithm}

\newpage

\subsection{With variable initialization}
\begin{algorithm}[H]
\caption{Evaluation of the convergence metric $R_{\mathrm{resampled}}(\sigma)$, where fresh random initial centroid configurations are drawn independently for each value of $\sigma$. This experiment captures both initialization variability and the smoothing effect of $\sigma$, producing an initialization-averaged convergence curve.}
\begin{algorithmic}[1]

\Require Data matrix $X$, number of clusters $k$, iterations $T$, number of samples $M$
\Require Sigma range $\{\sigma_1, \dots, \sigma_L\}$

\For{$\ell = 1 \dots L$}
    \State $\sigma \gets \sigma_\ell$
    \State Initialize list $\mathcal{D} = \{\}$

    \For{$i = 1 \dots M$}
        \State Draw a fresh initialization $\mu^{(0)}$
        \State $\mu^{\text{KM}} \gets \text{KMeans}(X, \mu^{(0)})$
        \State $\mu^{\text{soft}} \gets \text{SoftKMeansRBFGrad}(X, \mu^{(0)}, \sigma)$
        \State Compute optimal permutation $\pi$
        \State $d_i \gets \sum_{j=1}^k \left\| \mu^{\text{KM}}_j - \mu^{\text{soft}}_{\pi(j)} \right\|$
        \State Append $d_i$ to $\mathcal{D}$
    \EndFor

    \State $R_{\text{resampled}}(\sigma_\ell) \gets \text{mean}(\mathcal{D})$
\EndFor

\State \Return $\{R_{\text{resampled}}(\sigma_1), \dots, R_{\text{resampled}}(\sigma_L)\}$

\end{algorithmic}
\label{alg:metric-stability}
\end{algorithm}

\end{document}